\titlespacing*{\subsection}{0 pt}{3.0pt plus 0pt minus 2.4pt}{3.0pt plus 0pt minus 2.4pt}
\newtheorem{thm}{Theorem}
\newtheorem{thm2}{Theorem}
\newtheorem{cor}{Corollary}
\newtheorem{lem}{Lemma}
\newcommand*{\Prob}{\mathbb{P}}
\newcommand*{\window}{t}
\newcommand*{\Real}{\mathbb{R}}
\newcommand{\largenorm}[1]{{\Big\| #1 \Big\|_{2}}}
\newcommand{\largeabsnorm}[1]{{\Big\| #1 \Big\|_{1}}}
\newcommand*{\absnorm}[1]{\| #1 \|_1}
\newcommand*{\poly}[1]{\text{poly}(#1)}
\newcommand*{\Oh}[1]{\mathcal{O}(#1)}
\newcommand{\polylog}[1]{\text{polylog}(#1)}
\newcommand{\norm}[1]{{\| #1 \|_{2}}}
\newcommand*{\E}{\mathbb{E}}
\title{Learning Overcomplete HMMs}
\author{
	Vatsal Sharan\\
	Stanford University\\
	\texttt{vsharan@stanford.edu} \\
	\And
	Sham Kakade \\
	University of Washington \\
	\texttt{sham@cs.washington.edu} \\
	\AND
	Percy Liang \\
	Stanford University \\
	\texttt{pliang@cs.stanford.edu} \\
	\And
	Gregory Valiant \\
	Stanford University \\
	\texttt{valiant@stanford.edu} \\
}
\begin{document}
	\definecolor{mydarkblue}{rgb}{0,0.08,0.45}
	\hypersetup{
		colorlinks=true,
		linkcolor=mydarkblue,
		citecolor=mydarkblue,
		filecolor=mydarkblue,
		urlcolor=mydarkblue,
	}
\maketitle
\vspace{-10pt}
\vspace{-12pt}
\begin{abstract}
\vspace{-8pt}
We study the problem of learning overcomplete HMMs---those that have many hidden states but a small output alphabet. Despite having significant practical importance, such HMMs are poorly understood with no known positive or negative results for efficient learning. In this paper, we present several new results---both positive and negative---which help define the boundaries between the tractable and intractable settings. Specifically, we show positive results for a large subclass of HMMs whose transition matrices are sparse, well-conditioned, and have small probability mass on short cycles.  On the other hand, we show that learning is impossible given only a polynomial number of samples for HMMs with a small output alphabet and whose transition matrices are random regular graphs with large degree. We also discuss these results in the context of learning HMMs which can capture long-term dependencies.
\end{abstract}

\vspace{-18pt}
\section{Introduction}
\vspace{-5pt}
Hidden Markov Models (HMMs) are commonly used for data with natural sequential structure (e.g., speech, language, video).
This paper focuses on \emph{overcomplete} HMMs, where the number of output symbols $m$ is much smaller than the number of hidden states $n$.
As an example, for an HMM that outputs natural language documents one character at a time, the number of characters $m$ is quite small, but the number of hidden states $n$ would need to be very large to encode the rich syntactic, semantic, and discourse structure of the document.


Most algorithms for learning HMMs with provable guarantees
assume the transition $T \in \mathbb R^{n \times n}$ and observation $O \in \mathbb R^{m \times n}$ matrices are full rank \cite{anandkumar2012method,anandkumar13tensor,mossel2005learning} and hence do not apply to the overcomplete regime. A notable exception is the recent work of  \citet{huang14hmm} who studied this setting where $m \ll n$ and showed that \emph{generic} HMMs can be learned in polynomial time given \emph{exact} moments of the output process (which requires infinite data). Though understanding properties of generic HMMs is an important first step, in reality, HMMs with a large number of hidden states typically have structured, non-generic transition matrices---e.g., consider sparse transition matrices or transition matrices of factorial HMMs \cite{ghahramani1997factorial}. \citet{huang14hmm} also assume access to exact moments, which leaves open the question of when learning is possible with efficient sample complexity.
Summarizing, we are interested in the following questions:
\begin{enumerate}
	\item What are the fundamental limitations for learning overcomplete HMMs?
	\item What properties of HMMs make learning possible with polynomial samples?
	\item Are there structured HMMs which can be learned in the overcomplete regime?
\end{enumerate}


\vspace{-5pt}
\noindent \textbf{Our contributions.} We make progress on all three questions in this work, sharpening our understanding of the boundary between tractable and intractable learning. We begin by stating a negative result, which perhaps explains some of the difficulty of obtaining strong learning guarantees in the overcomplete setting.

\begin{thm}\label{thm:lower_bound}
	The parameters of HMMs where i) the transition matrix encodes a random walk on a regular graph on $n$ nodes with degree polynomial in $n$, ii) the output alphabet $m=\polylog{n}$ and, iii) the output distribution for each hidden state is chosen uniformly and independently at random, cannot be learned (even approximately) using polynomially many samples over any window length polynomial in $n$, with high probability over the choice of the observation matrix.
\end{thm}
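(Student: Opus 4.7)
The plan is an information-theoretic lower bound: I will show that with high probability over $O$, the joint output distribution $P_O$ of the HMM on any polynomial-length window is within total-variation distance $1/\poly{n}$ of a fixed reference distribution that does not depend on $O$. Standard Le Cam or Fano-type reductions then immediately rule out recovering $O$ (or any parameterization of the HMM) from polynomially many samples.

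The first ingredient is rapid mixing. A random $d$-regular graph on $n$ nodes with $d = \poly{n}$ is a near-Ramanujan expander with high probability: the non-trivial eigenvalues of its transition matrix $T$ are $\Oh{1/\sqrt{d}}$, so $T$ drives any distribution on $[n]$ toward the uniform stationary distribution $\pi$ geometrically in $L_2$, and $L_1$-distance $1/\poly{n}$ to $\pi$ is attained in a constant number of steps. This has the important consequence that the Bayesian filtering recursion $\mu_i = T\tilde\mu_{i-1}$ smooths even a sharply informative posterior $\tilde\mu_{i-1}$ back toward uniform in one step.

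The core step is to bound $\mathrm{KL}(P_O \,\|\, Q)$ for $Q(x_1,\dots,x_t) := \prod_i \bar{O}(x_i)$, the product distribution with marginal $\bar{O} := \pi^\top O$. By the chain rule,
\begin{equation*}
\mathrm{KL}(P_O \,\|\, Q) \;=\; \sum_{i=1}^{t} \E_{x_{<i}\sim P_O}\!\left[\mathrm{KL}\!\left(O\mu_i \,\|\, \bar{O}\right)\right],
\end{equation*}
where $\mu_i$ is the Bayesian posterior over $h_i$ given $x_1,\dots,x_{i-1}$. The expander step gives $\|\mu_i - \pi\|_1 \le 1/\poly{n}$, forcing $\|O\mu_i - \bar{O}\|_1 \le 1/\poly{n}$; combined with a concentration-based uniform lower bound on the entries of $\bar{O}$ (which holds since $\bar{O}$ is a sum of $n$ i.i.d.\ columns in the simplex), this yields a per-step KL of $1/\poly{n}$. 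Summing over the $t = \poly{n}$ observations keeps $\mathrm{KL}(P_O \,\|\, Q) \le 1/\poly{n}$, and Pinsker converts this to a TV bound of the same order. Because the columns of $O$ are i.i.d., $\bar{O}$ itself concentrates around the deterministic $\E[\bar{O}]$, so $Q$ is close to a reference distribution that is independent of $O$; two independent draws of $O$ therefore yield output distributions within TV $1/\poly{n}$ of one another, and no polynomial-sample test can separate them.

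The main obstacle is the per-step KL bound: the filtered posterior $\mu_i$ can, under adversarial conditioning on the past, become concentrated on a few hidden states whose observation columns are atypical, which would push $O\mu_i$ far from $\bar{O}$. The crux is to show that the single expander step of $T$ separating consecutive observations averages away the concentration—essentially because $T$ sends any distribution into one spread across a polynomial-sized neighborhood, against which random columns of $O$ concentrate—so that subsequent observations behave as i.i.d.\ draws from $\bar{O}$ regardless of the past, with enough quantitative margin to survive summation over the full polynomial window length.
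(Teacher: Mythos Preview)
Your approach has a genuine quantitative gap that prevents it from handling arbitrary polynomial window lengths, which the statement requires.

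The claim ``the expander step gives $\|\mu_i - \pi\|_1 \le 1/\poly{n}$'' is not correct. One application of $T$ to a worst-case posterior $\tilde\mu_{i-1}$ only yields $\|T\tilde\mu_{i-1} - \pi\|_2 \le O(1/\sqrt{d})$; converting to $\ell_1$ costs a $\sqrt{n}$ factor, leaving $O(\sqrt{n/d})$, which is large when $d = n^\epsilon$ with $\epsilon < 1$. The more refined route you sketch in the last paragraph---that $(OT)_{jk}$ concentrates around $1/m$ uniformly in $k$---is correct and is exactly the paper's Lemma on $P[o_{t+1}=j \mid h_t = i]$, but it only gives $\|O\mu_i - \bar O\|_\infty \le O(\sqrt{m\log n/d})$, hence a per-step KL of order $O(m\log n/d) = \tilde O(1/n^{\epsilon})$. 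Summing this over a window of length $t = \poly{n}$ gives total KL of order $t/n^{\epsilon}$, which is not $1/\poly{n}$ once $t \gg n^\epsilon$; Pinsker then yields nothing. The difficulty is intrinsic to comparing each conditional $P[o_i \mid o_{<i}]$ against the \emph{marginal} $\bar O$: consecutive outputs genuinely carry $\Theta(1/\sqrt d)$ correlation, and that accumulates linearly.

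The paper avoids this by comparing $P[o_t \mid o_1,\dots,o_{t-1}]$ not to $\bar O$ but to $P[o_t \mid o_{t-\tau+1},\dots,o_{t-1}]$ with $\tau = \lfloor\log_m n\rfloor$. The difference between these two conditionals is governed by a product of $\tau$ time-inhomogeneous transition operators $T^{(s)} = O^{(s)} T E^{(s)}$ (where $O^{(s)}, E^{(s)}$ are diagonal reweightings coming from conditioning on future outputs). The same concentration you cite makes $O^{(s)}$ and $E^{(s)}$ nearly scalar, so each $T^{(s)}$ inherits the spectral gap of $T$ and contracts mean-zero vectors by $n^{-\epsilon_1}$ for some $\epsilon_1>0$; iterating over $\tau \sim \log n$ steps gives a \emph{super-polynomial} error $n^{-\Theta(\log n)}$, which survives summation over any polynomial $t$. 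The argument is then completed by a bit-counting step you do not have: the $\tau$-window distribution has at most $m^\tau \le n$ outcomes, so polynomially many samples carry only $\tilde O(n)$ bits, while specifying the random $d$-regular transition graph requires $\Omega(nd)$ bits. Note in particular that the object shown to be unlearnable is $T$, not $O$; your Le Cam reduction over two draws of $O$ targets the wrong parameter.
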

\vspace{-6pt}
Theorem \ref{thm:lower_bound} is somewhat surprising, as parameters of HMMs with such transition matrices can be easily learned in the non-overcomplete ($m \ge n$) regime. This is because such transition matrices are full-rank and their condition numbers are polynomial in $n$; hence spectral techniques such as \citet{anandkumar13tensor} can be applied. Theorem~\ref{thm:lower_bound} is also fundamentally of a different nature as compared to lower bounds based on parity with noise reductions for HMMs \cite{mossel2005learning}, as ours is information-theoretic.\footnote{Parity with noise is information theoretically easy given observations over a window of length at least the number of inputs to the parity. This is linear in the number of hidden states of the parity with noise HMM, whereas Theorem \ref{thm:lower_bound} says that the sample complexity must be super polynomial for any polynomial sized window.} Also, it seems far more damning as the hard cases are seemingly innocuous classes such as random walks on dense graphs. The lower bound also shows that analyzing generic or random HMMs might not be the right framework to consider in the overcomplete regime as these might not be learnable with polynomial samples even though they are identifiable. This further motivates the need for understanding HMMs with structured transition matrices. We provide a proof of Theorem \ref{thm:lower_bound} with more explicitly stated conditions in Appendix \ref{sec:lower_bound_proof}.

\vspace{-2pt}
For our positive results we focus on understanding properties of structured transition matrices which make learning tractable. To disentangle additional complications due to the choice of the observation matrix, we will assume that the observation matrix is drawn at random throughout the paper. Long-standing open problems on learning \emph{aliased} HMMs (HMMs where multiple hidden states have identical output distributions) \cite{blackwell57identifiable,ito1992identifiability,weiss2015learning} hint that understanding learnability with respect to properties of the observation matrix is a daunting task in itself, and is perhaps best studied separately from understanding how properties of the transition matrix affect learning.

Our positive result on learnability (Theorem \ref{thm:learnability}) depends on two natural graph-theoretic properties of the transition matrix. We consider transition matrices which are i) {sparse} (hidden states have constant degree) and ii) have small probability mass on cycles shorter than $10\log_m n$ states---and show that these HMMs can be learned efficiently using tensor decomposition and the method of moments, given random observation matrices. The condition prohibiting short cycles might seem mysterious. Intuitively, we need this condition to ensure that the Markov Chain visits a sufficient large portion of the state space in a short interval of time, and in fact the condition stems from information-theoretic considerations. We discuss these further in Sections~\ref{sec:examples} and \ref{subsec:assumptions}. We also discuss how our results relate to learning HMMs which capture long-term dependencies in their outputs, and introduce a new notion of how well an HMM captures long-term dependencies. These are discussed in Section \ref{sec:long_term}. 


We also show new identifiability results for sparse HMMs. These results provide a finer picture of identifiability than \citet{huang14hmm}, as ours hold for sparse transition matrices which are not generic.

\noindent \textbf{Technical contribution.} To prove Theorem \ref{thm:learnability} we show that the Khatri-Rao product of dependent random vectors is well-conditioned under certain conditions. Previously, \citet{bhaskara2014smoothed} showed that the Khatri-Rao product of independent random vectors is well-conditioned to perform a smoothed analysis of tensor decomposition, their techniques however do not extend to the dependent case. For the dependent case, we show a similar result using a novel Markov chain coupling based argument which relates the condition number to the best coupling of output distributions of two random walks with disjoint starting distributions. The technique is outlined in Section \ref{sec:technical_contri}.

\noindent \textbf{Related work.} Spectral methods for learning HMMs have been studied in \citet{anandkumar13tensor,bhaskara13tensor,allman09identifiability,hsu2012spectral}, but these results require $m\ge n$. In \citet{allman09identifiability}, the authors show that that HMMs are identifiable given moments of continuous observations over a time interval of length $N=2\tau+1$ for some $\tau$ such that ${\tau+m-1 \choose m-1}\ge n$. When $m\ll n$ this requires $\tau=\Oh{n^{1/m}}$. \citet{bhaskara13tensor} give another bound on window size which requires $\tau=\Oh{n/m}$. However, with a output alphabet of size $m$, specifying all moments in a $N$ length continuous time interval requires $m^N$ time and samples, and therefore all of these approaches lead to exponential runtimes when $m$ is constant with respect to $n$. Also relevant is the work by \citet{anandkumar2015learning} on guarantees for learning certain latent variable models such as Gaussian mixtures in the overcomplete setting through tensor decomposition. As mentioned earlier, the work closest to ours is \citet{huang14hmm} who showed that generic HMMs are identifiable with $\tau=\Oh{\log_m n}$, which gives the first polynomial runtimes for the case when $m$ is constant.

\noindent \textbf{Outline.} Section \ref{sec:setup} introduces the notation and setup. It also provides examples and a high-level overview of our proof approach. Section \ref{sec:learnability} states the learnability result, discusses our assumptions and HMMs which satisfy these assumptions. Section \ref{sec:identifiability} contains our identifiability results for sparse HMMs. Section \ref{sec:long_term} discusses natural measures of long-term dependencies in HMMs. We state the lower bound for learning dense, random HMMs in Section \ref{sec:lower_bound}. We conclude in Section \ref{sec:conclusion}. We provide proof sketches in the main body, rigorous proofs are deferred to the Appendix.  


\vspace{-8pt}
\section{Setup and overview}\label{sec:setup}
\vspace{-8pt}
In this section we first introduce the required notation, and then outline the method of moments approach for parameter recovery. We also go over some examples to provide a better understanding of the classes of HMMs we aim to learn, and give a high level proof strategy.

\subsection{Notation and preliminaries}

We will denote the output at time $t$ by $y_t$ and the hidden state at time $t$ by $h_t$. Let the number of hidden states be $n$ and the number of observations be $m$. Assume that the output alphabet is $\{0,\dots,m-1\}$ without loss of generality. Let $T$ be the transition matrix and $O$ be the observation matrix of the HMM, both of these are defined so that the columns add up to one. For any matrix $A$, we refer to the $i$th column of $A$ as $A_i$. $T'$ is defined as the transition matrix of the time-reversed Markov chain, but we do not assume reversibility and hence $T$ may not equal $T'$.
Let $y_{i}^{j} = y_{i}, \dots, y_j$ denote the sequence of outputs from time $i$ to time $j$. Let $l_{i}^{j}= l_{i}, \dots, l_j$ refer to a string of length $i+j-1$ over the output alphabet, denoting a particular output sequence from time $i$ to $j$.
Define a bijective mapping $L$ which maps an output sequence $l_1^{\tau}\in\{0,\dots, m-1\}^{\tau}$ into an index $L(l_1^{\tau})\in \{1, \dots, m^\tau\}$ and the associated inverse mapping $L^{-1}$. 

Throughout the paper, we assume that the transition matrix $T$ is ergodic, and hence has a stationary distribution. We also assume that every hidden state has stationary probability at least $1/\poly{n}$. This is a necessary condition, as otherwise we might not even visit all states in $\poly{n}$ samples. We also assume that the output process of the HMM is stationary. A stochastic process is stationary if the distribution of any subset of random variables is invariant with respect to shifts in the time index---that is, $\Prob[y_{-\tau}^{\tau}=l_{-\tau}^{\tau}]=\Prob[y_{-\tau+T}^{\tau+T}=l_{-\tau}^{\tau}]$ for any $\tau,T$ and string $l_{-\tau}^{\tau}$. This is true if the initial hidden state is chosen according to the stationary distribution.

Our results depend on the conditioning of the matrix $T$ with respect to the $\ell_1$ norm. We define $\sigma_{\min}^{(1)}(T)$ as the minimum $\ell_1$ gain of the transition matrix $T$ over all vectors $x$ having unit $\ell_1$ norm (not just non-negative vectors $x$, for which the ratio would always be 1):
\begin{align}
\sigma_{\min}^{(1)}(T) = \min_{x\in \mathbb{R}^n}\frac{\absnorm{Tx}}{\absnorm{x}}\nonumber
\end{align}
$\sigma_{\min}^{(1)}(T)$ is also a natural parameter to measure the long-term dependence of the HMM---if $\sigma_{\min}^{(1)}(T)$ is large then $T$ preserves significant information about the distribution of hidden states at time 0 at a future time $t$, for all initial distributions at time 0. We discuss this further in Section \ref{sec:long_term}. 

\subsection{Tensor basics}\label{subsec:tensor}

Given a 3rd order rank-$k$ tensor $M\in \Real^{d_1\times d_2 \times d_3}$, it can be written in terms of its factor matrices $A,B$ and $C$:
\begin{align}
M = \sum_{i\in [k]}^{} A_i \otimes B_i \otimes C_i  \nonumber
\end{align}
where $A_i$ denotes the $i$th column of a matrix $A$. Here $\otimes$ denotes the tensor product: if $a,b,c \in \mathbb{R}^d$ then $a \otimes b \otimes c \in \mathbb{R}^{d\times d \times d}$ and $(a \otimes b \otimes c)_{ijk}=a_i b_j c_k$. We refer to different dimensions of a tensor as the \emph{modes} of the tensor.

We denote $M_{(k)}$ as the mode $k$ matricization of the tensor, which is the flattening of the tensor along the $k$th direction obtained by stacking all the matrix slices together. For example $T_{(1)}$ denotes flattening of a tensor $T\in \mathbb{R}^{d_1 \times d_2 \times D_3}$ to a $(d_1 \times d_2d_3)$ matrix. Recall that we denote the Khatri-Rao product of two matrices $A$ and $B$ as $(A \odot B )_i = (A_i \otimes B_i)_{(1)}$, where $(A_i \otimes B_i)_{(1)}$ denotes the flattening of the matrix $A_i \otimes B_i$ into a row vector. We denote the set $\{1,2,\dotsb,k\}=[k]$. 

Kruskal's condition \cite{kruskal1977three} says that if $A$ and $B$ are full rank and no two rows of $C$ are linearly dependent, then $M$ can be efficiently decomposed into the factors $A,B,C$ and the decomposition is unique upto scaling and permutation. The simultaneous decomposition algorithm \cite{chang1996full,leurgans1993decomposition} (Algorithm \ref{alg:learn_tensor}), is a well known algorithm to decompose tensors which satisfy Kruskal's condition.

\subsection{Method of moments for learning HMMs}\label{sec:technical_contri}
Our algorithm for learning HMMs follows the method of moments based approach, outlined for example in \citet{anandkumar2012method} and \citet{huang14hmm}. In contrast to the more popular Expectation-Maximization (EM) approach which can suffer from slow convergence and local optima \cite{redner1984mixture}, the method of moments approach ensures guaranteed recovery of the parameters under mild conditions.

The method of moments approach to learning HMMs has two high-level steps. In the first step, we write down a tensor of empirical moments of the data, such that the factors of the tensor correspond to parameters of the underlying model. In the second step, we perform tensor decomposition to recover the factors of the tensor---and then recover the parameters of the model from the factors. The key fact that enables the second step is that tensors have a unique decomposition under mild conditions on their factors, for example tensors have a unique decomposition if all the factors are full rank. The uniqueness of tensor decomposition permits unique recovery of the parameters of the model. 

We will learn the HMM using the moments of observation sequences $y_{-\tau}^{\tau}$ from time $-\tau$ to $\tau$. Since the output process is assumed to be stationary, the distribution of outputs is the same for any contiguous time interval of the same length, and we use the interval $-\tau$ to $\tau$ in our setup for convenience.
We call the length of the observation sequences used for learning the \emph{window length} $N = 2\tau+1$.
Since the number of samples required to estimate moments over a window of length $N$ is $m^N$, it is desirable to keep $N$ small.
Note that to ensure polynomial runtime and sample complexity for the method of moments approach, the window length $N$ must be $\Oh{\log_m n}$. 

We will now define our moment tensor. Given moments over a window of length $N=2\tau+1$, we can construct the third-order moment tensor $M \in \Real^{m^{\tau}\times m^{\tau} \times m}$ using the mapping $L$ from strings of outputs to indices in the tensor:
\vspace{-4pt}
\begin{align}
M_{(L(l_1^{\tau}),L(l_{-1}^{-\tau}), l_0)} = \Prob[y_{-\tau}^{\tau}=l_{-\tau}^{\tau}]. \nonumber
\end{align}
$M$ is simply the tensor of the moments of the HMM over a window length $N$, and can be estimated directly from data. We can write $M$ as an outer product because of the Markov property:
\vspace{-4pt}
\begin{align}
M = A \otimes B \otimes C \nonumber
\end{align}
where $A\in \Real^{m^\tau\times n}, B \in \Real^{m^\tau\times n}, C \in \Real^{m\times n}$ are defined as follows (here $h_0$ denotes the hidden state at time 0):
\vspace{-4pt}
\begin{align}
A_{L(l_1^{\tau}),i} &= \Prob[y_1^{\tau} = l_1^{\tau} \mid h_0 = i]\nonumber\\
B_{L(l_{-1}^{-\tau}),i} &= \Prob[y_{-1}^{-\tau} = l_{-1}^{-\tau} \mid h_0 = i]\nonumber\\
C_{l_0,i} &= \Prob[y_0 = l, h_0 = i]\nonumber
\end{align}
$T$ and $O$ can be related in a simple manner to $A, B$ and $C$.
If we can decompose the tensor $M$ into the factors $A$, $B$ and $C$, we can recover $T$ and $O$ from $A$, $B$ and $C$. We refer the reader to Algorithm \ref{alg:learn_tensor} for more details.
\begin{algorithm*}
	\caption{Learning HMMs with $m \ll n$ \cite{huang14hmm}}
	\label{alg:learn_tensor}
	\textbf{Input: }Moment tensor $M\in \Real^{m^\tau\times m^\tau \times m}$ over a window of length $\tau$\\
	\textbf{Output: }Estimates $\hat{T}$ and $\hat{O}$\\
	
	\textbf{Tensor decomposition using simultaneous diagonalization: }
	\begin{enumerate}
		
		\item Choose $a, b\in\Real^{d}$ uniformly at random. Project $M$ along the 3rd dimension to obtain $X, Y$ with $ X_{i,j} = \sum_{k}^{}M_{i,j,k}a_k$ and $Y_{i,j} = \sum_{k}^{}M_{i,j,k}b_k$. 
		\item Compute the eigendecomposition of $X(Y)^{-1}$ and $Y(X)^{-1}$. Let the columns of $A$ and $B$ to be the eigenvectors of $X(Y)^{-1}$ and $Y(X)^{-1}$ respectively. Pair them corresponding to reciprocal eigenvalues, and scale $A$ and $B$ to be column-stochastic.
		\item Let ${M}_{(3)}\in \Real^{d^{2\tau}\times d}$ be the mode 3 matricization of $M$. Set ${C= {M}_{(3)}((A\odot B)^\dagger)^T}$.
	\end{enumerate}
	
	\textbf{Estimating $T$ and $O$ from tensor factors: }
	\begin{enumerate}
		\item Estimate $O$ by normalizing $C$ to be stochastic, i.e. $\hat{O}_{[:,i]} = C_{:,i}/(e^{T}C_{[:,i]})$ for all $i$.
		\item Marginalize $A$ over the final time step to obtain $A^{(\tau-1)}$.
		\item Estimate ${T}= (O\odot A^{(\tau-1)})^\dagger A$, 
	\end{enumerate}
\end{algorithm*}

\subsection{High-level proof strategy} \label{coupling}
As the transition and observation matrices can be recovered from the factors of the tensors, our goal is to analyze the conditions under which the tensor decomposition step works provably. Note that the factor matrix $A$ is the likelihood of observing each sequence of observations conditioned on starting at a given hidden state. We'll refer to $A$ as the \emph{likelihood matrix} for this reason. $B$ is the equivalent matrix for the time-reversed Markov chain. If we show that $A, B$ are full rank and no two columns of $C$ are the same, then the tensor has a unique decomposition (Kruskal's condition \cite{kruskal1977three}), and the HMM can be learned provided the exact moments using the simultaneous diagonalization algorithm (see Algorithm \ref{alg:learn_tensor}). We show this property for our identifiability results. For our learnability results, we show that the matrices $A$ and $B$ are well-conditioned (have condition numbers polynomial in $n$), which implies learnability from polynomial samples. This is the main technical contribution of the paper, and requires analyzing the condition number of the Khatri-Rao product of dependent random vectors. Before sketching the argument, we first introduce some notation. We can define $A^{(t)}$ as the likelihood matrix over $t$ steps:
\vspace{-4pt}$$A_{L(l_1^{t}),i}^{(t)} = \Prob[y_1^{t}= l_1^{t} \mid h_0 = i].$$
$A^{(t)}$ can be recursively written down as follows:
\vspace{-4pt}
\begin{align}
A^{(0)}=OT, \; A^{(t)}=(O\odot A^{(t-1)})T \label{eq:rec}
\end{align}
where $A\odot B$, denotes the Khatri-Rao product of the matrices $A$ and $B$. If $A$ and $B$ are two matrices of size $m_1 \times r$ and $m_2 \times r$ then the Khatri-Rao product is a $m_1 m_2 \times r$ matrix whose $i$th column is the outer product $A_i \otimes B_i$ flattened into a vector. Note that  $A^{(\tau)}$ is the same as $A$. We now sketch our argument for showing that $A^{(\tau)}$ is well-conditioned under appropriate conditions.
\vspace{-6pt}
\paragraph{Coupling random walks to analyze the Khatri-Rao product.}

As mentioned in the introduction, in this paper we are interested in the setting where the transition matrix is fixed but the observation matrix is drawn at random. If we could draw fresh random matrices $O$ at each time step of the recursion in Eq.~\ref{eq:rec}, then $A$ would be well-conditioned by the smoothed analysis of the Khatri-Rao product due to \citet{bhaskara2014smoothed}. However, our setting is significantly more difficult, as we do not have access to fresh randomness at each time step, so the techniques of \citet{bhaskara2014smoothed} cannot be applied here. As pointed out earlier, the condition number of $A$ in this scenario depends crucially on the transition matrix $T$, as $A$ is not even full rank if $T=I$.

Instead, we analyze $A$ by a coupling argument. To get some intuition for this, note that if $A$ does not have full rank, then there are two disjoint sets of columns of $A$ whose linear combinations are equal, and these combination weights can be used to setup the initial states of two random walks defined by the transition matrix $T$ which have the same output distribution for $\tau$ time steps. More generally, if $A$ is ill-conditioned then there are two random walks with disjoint starting states which have very similar output distributions. We show that if two random walks have very similar output distributions over $\tau$ time steps for a randomly chosen observation matrix $O$, then most of the probability mass in these random walks can be coupled. On the other hand, if $(\sigma_{\min}^{(1)}(T))^\tau$ is sufficiently large, the total variational distance between random walks starting at two different starting states must be at least $(\sigma_{\min}^{(1)}(T))^\tau$ after $\tau$ time steps, and so there cannot be a good coupling, and $A$ is well-conditioned. We provide a sketch of the argument for a simple case in Section \ref{sec:learnability}.
\subsection{Illustrative examples} \label{sec:examples}
\begin{figure}
	\centering
	\begin{subfigure}{0.4 \textwidth}
		\centering
		\includegraphics[height=1 in]{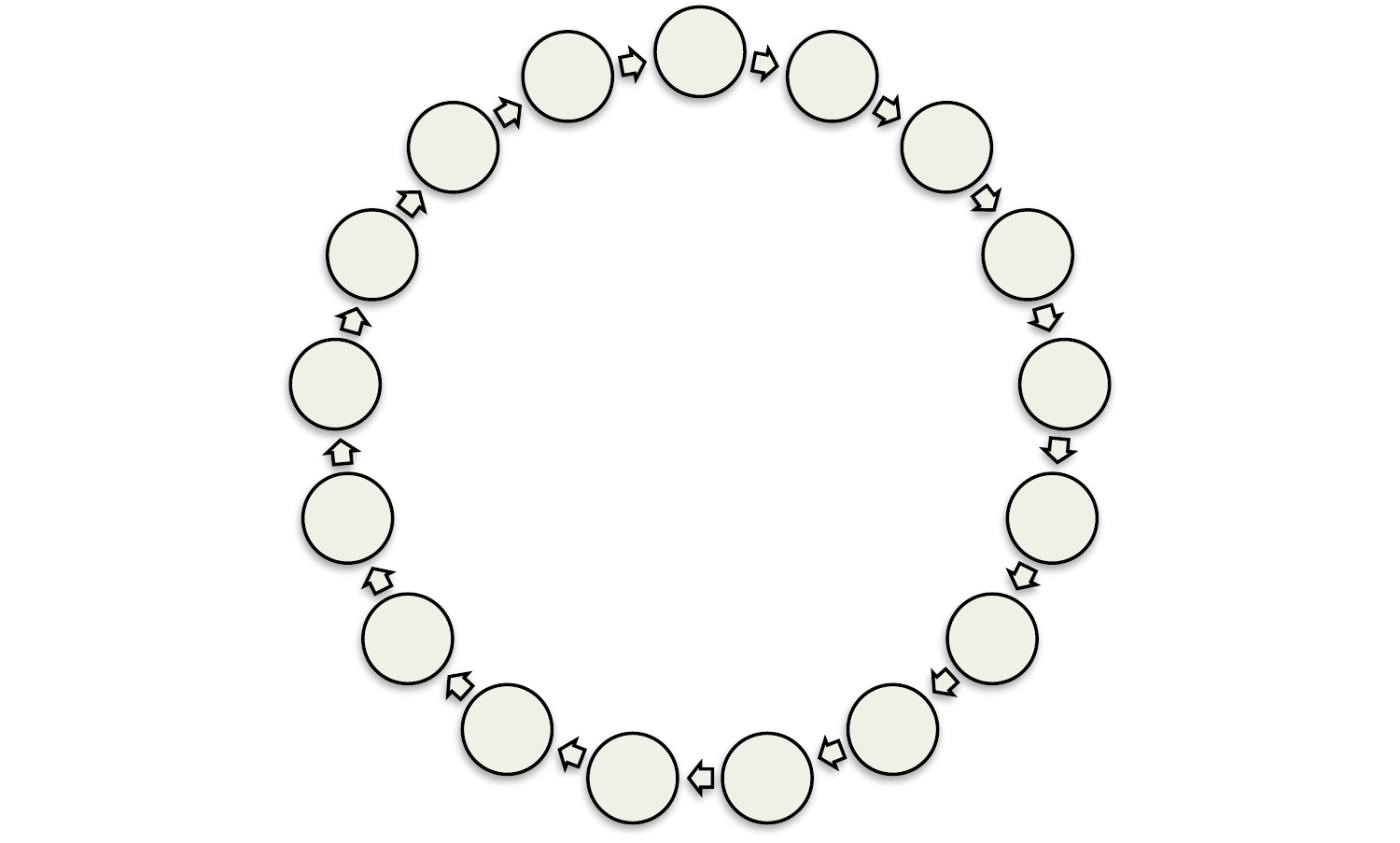}
		\caption{Transition matrix is a cycle, or a \\permutation on the hidden states.}
		\label{fig:example2}
	\end{subfigure}
	\begin{subfigure}{0.4 \textwidth}
		\centering
		\includegraphics[height=1 in]{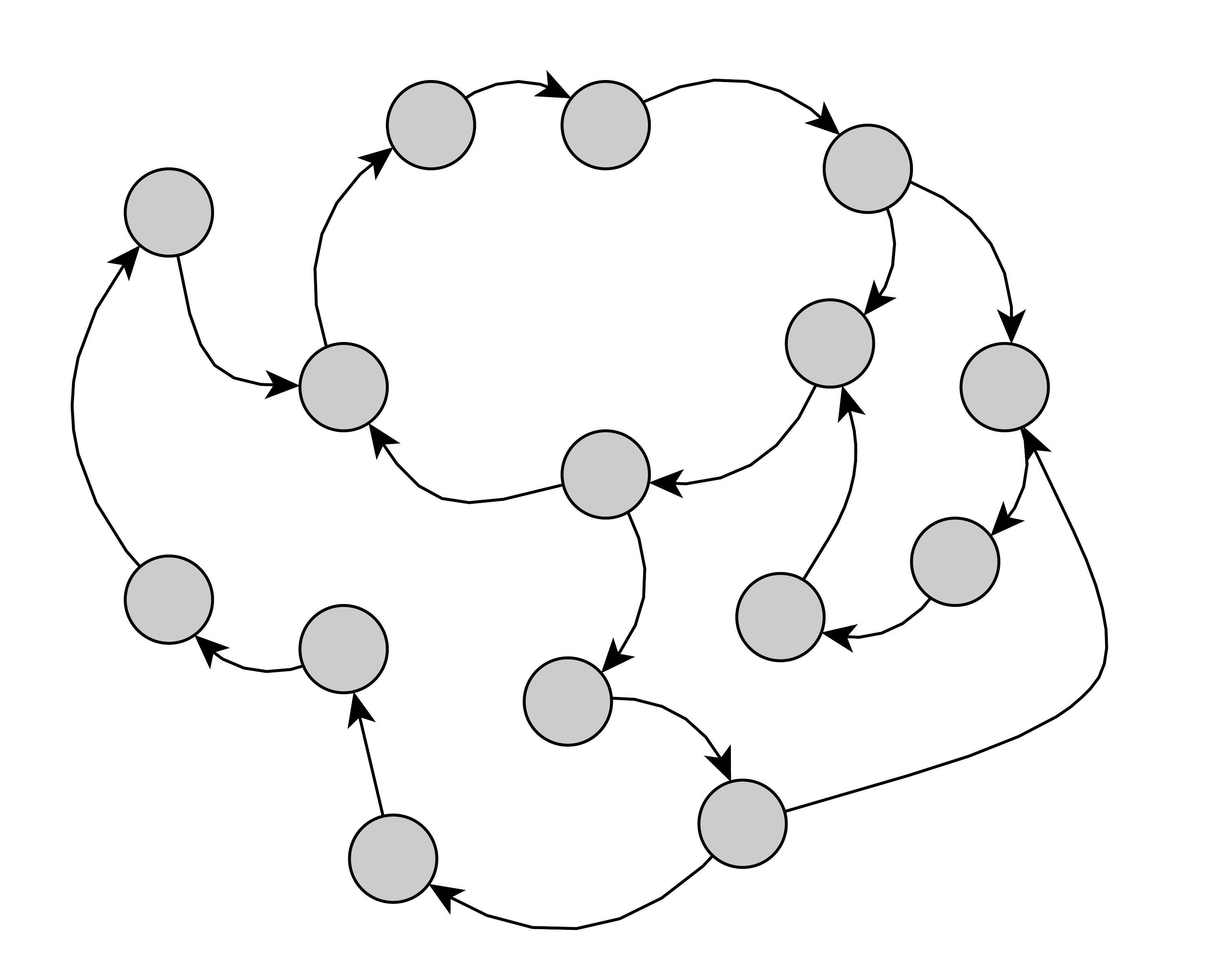}
		\caption{Transition matrix is a random walk on a graph with small degree and no short cycles.}
		\label{fig:example1}
	\end{subfigure}
	\caption{Examples of transition matrices which we can learn, refer to Section \ref{sec:examples} and Section \ref{subsec:examples}.}
	\label{fig:examples}
\end{figure}

We now provide a few simple examples which will illustrate some classes of HMMs we can and cannot learn. We first provide an example of a class of simple HMMs which can be handled by our results, but has non-generic transition matrices and hence does not fit into the framework of \citet{huang14hmm}. Consider an HMM where the transition matrix is a permutation or cyclic shift on the hidden states (see Fig. \ref{fig:example2}).  Our results imply that such HMMs are learnable in polynomial time from polynomial samples if the output distributions of the hidden states are chosen at random. We will try to provide some intuition about why an HMM with the transition matrix as in Fig. \ref{fig:example2} should be efficiently learnable. Let us consider the simple case when the outputs are binary (so $m=2$) and each hidden state deterministically outputs a 0 or a 1, and is labeled by a 0 or a 1 accordingly. If the labels are assigned at random, then with high probability the string of labels of any continuous sequence of $2\log_2 n$ hidden states in the cycle in Fig. \ref{fig:example2} will be unique. This means that the output distribution in a $2\log_2 n$ time window is unique for every initial hidden state, and it can be shown that this ensures that the moment tensor has a unique factorization. By showing that the output distribution in a $2\log_2 n$ time window is very different for different initial hidden states---in addition to being unique---we can show that the factors of the moment tensor are well-conditioned, which allows recovery with efficient sample complexity. As another slightly more complex example of an HMM we can learn, Fig. \ref{fig:example1} depicts an HMM whose transition matrix is a random walk on a graph with small degree and no short cycles. Our learnability result can handle such HMMs having structured transition matrices.

As an example of an HMM which cannot be learned in our framework, consider an HMM with transition matrix $T=I$ and binary observations ($m=2$), see Fig. \ref{fig:nonex1}. In this case, the probability of an output sequence only depends on the total number of zeros or ones in the sequence. Therefore, we only get $t$ independent measurements from windows of length $t$, hence windows of length $\Oh{n}$ instead of $\Oh{\log_2 n}$ are necessary for identifiability (also refer to \citet{blischke1964estimating} for more discussions on this case). More generally, we prove in Proposition \ref{lem:iden_lower_bnd} that for small $m$ a transition matrix composed only of cycles of constant length (see Fig. \ref{fig:nonex2}) requires the window length to be polynomial in $n$ to become identifiable.
\begin{restatable}{prop}{idenbound}\label{lem:iden_lower_bnd}
  Consider an HMM on $n$ hidden states and $m$ observations with the transition matrix being a permutation composed of cycles of length $c$. Then windows of length $O(n^{1/{m^c}})$ are necessary for the model to be identifiable, which is polynomial in $n$ for constant $c$ and $m$.
\end{restatable}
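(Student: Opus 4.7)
The plan is a dimension-counting argument: the cycle structure of $T$ forces the length-$t$ moment tensor to lie in a subspace of dimension only $\Oh{t^{m^c}}$, which is too small to encode the $\Omega(n)$ continuous parameters of the observation matrix once $t \ll n^{1/m^c}$.

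First I reduce to $t = kc$ by replacing $t$ with $c\lceil t/c\rceil$: a longer window can only provide more moment information, so non-identifiability at the enlarged window length implies non-identifiability at the original $t$. Because every cycle of $T$ has length exactly $c$, the hidden state at time $jc$ equals $h_0$ for every $j$, so conditional on $h_0$ the $k$ consecutive length-$c$ blocks $B_j = y_{(j-1)c}^{jc-1}$ are i.i.d.\ draws from a product distribution on $[m]^c$ determined by the observation distributions of the $c$ states visited during one period starting from $h_0$. Averaging over the uniform stationary $h_0$, the joint law of $y_0^{t-1}$ is a uniform mixture of $n$ i.i.d.\ $k$-fold product distributions on the block alphabet $[m]^c$. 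By exchangeability of these blocks, $\Prob[y_0^{t-1} = l_0^{t-1}]$ depends on $l_0^{t-1}$ only through the block-count vector $(n_b(l))_{b \in [m]^c}$, and the number of such vectors is $\binom{k + m^c - 1}{m^c - 1} \le (k + m^c)^{m^c}$. Hence the image of the parameter-to-moments map $\Phi$ lies in a linear subspace of $\Real^{m^t}$ of dimension $\Oh{k^{m^c}}$.

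Second I compare dimensions. Holding the known cyclic $T$ fixed, the free parameters are the $n(m-1)$ entries of the observation matrix, and $\Phi$ is polynomial from an $n(m-1)$-dimensional domain into this $\Oh{k^{m^c}}$-dimensional target. Once $k^{m^c} < n(m-1)$, the differential of $\Phi$ is rank-deficient everywhere on the open simplex, and the rank/fiber-dimension theorem produces a smooth curve of observation matrices all giving identical length-$t$ moments. Since the symmetry group of the cycle structure (cycle permutations and within-cycle rotations) is discrete, this curve contains genuinely inequivalent HMMs, so the model is not identifiable. Solving $k^{m^c} = \Theta(n)$ yields $k = \Omega(n^{1/m^c})$ and hence $t = \Omega(n^{1/m^c})$ for constant $c,m$, as claimed. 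The main obstacle I anticipate is converting the infinitesimal rank deficit into an honest pair of distinct stochastic observation matrices on the probability simplex (which is semialgebraic with boundary); the clean workaround is to run the fiber-dimension argument on the complex Zariski closure of $\Phi$ and then transfer a segment of the resulting complex fiber back to a real analytic arc in the interior of the simplex by starting from a generic observation matrix with strictly positive entries.
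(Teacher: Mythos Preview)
Your proposal is correct and follows essentially the same approach as the paper: group the window into $t/c$ length-$c$ blocks, observe that the output distribution is exchangeable in these blocks so the moment tensor is determined by the $\Oh{(t/c)^{m^c}}$ block-count vectors, and compare this against the $\Omega(n)$ free parameters in $O$. The paper's version is a one-paragraph informal dimension count that simply asserts non-identifiability once the measurement count falls below the parameter count; your treatment of the exchangeability step and the fiber-dimension/rank argument to extract an actual pair of inequivalent HMMs is more careful than what the paper provides, but the underlying idea is identical.
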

\vspace{-4pt}
The root cause of the difficulty in learning HMMs having short cycles is that they do not visit a large enough portion of the state space in $\Oh{\log_m n}$ steps, and hence moments over a $\Oh{\log_m n}$ time window do not carry sufficient information for learning. Our results cannot handle such classes of transition matrices, also see Section \ref{subsec:assumptions} for more discussion. 
\begin{figure}
	\centering
	\begin{subfigure}{0.4 \textwidth}
		\centering
		\includegraphics[height=1 in]{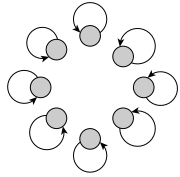}
		\caption{Transition matrix is the identity on\\ 8 hidden states.}
		\label{fig:nonex1}
	\end{subfigure}
	\begin{subfigure}{0.4 \textwidth}
		\centering
		\includegraphics[height=1 in]{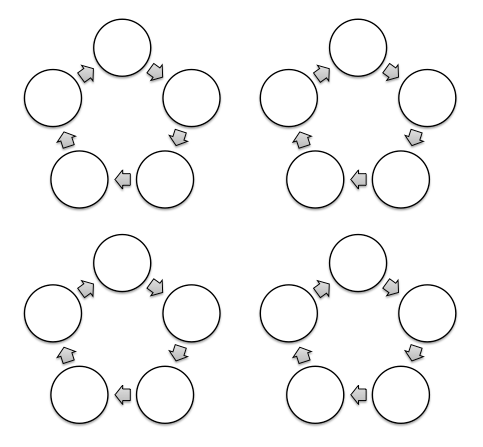}
		\caption{Transition matrix is a union of 4 cycles, each on 5 hidden states.}
		\label{fig:nonex2}
	\end{subfigure}
	\caption{Examples of transition matrices which do not fit in our framework. Proposition \ref{lem:iden_lower_bnd} shows that such HMMs where the transition matrix is composed of a union of cycles of constant length are not even identifiable from short windows of length $\Oh{\log_m n}$}
\end{figure}


\vspace{-8pt}
\section{Learnability results for overcomplete HMMs}\label{sec:learnability}
\vspace{-8pt}




In this section, we state our learnability result, discuss the assumptions and provide examples of HMMs which satisfy these assumptions. Our learnability results hold under the following conditions:

\noindent \textbf{Assumptions:} For fixed constants $c_1, c_2, c_3 > 1 $, the HMM satisfies the following properties for some $c>0$:
\begin{enumerate}
	\item \emph{Transition matrix is well-conditioned:} Both $T$ and the transition matrix $T'$ of the time reversed Markov Chain are well-conditioned in the $\ell_1$-norm: $\sigma_{\min}^{(1)}(T),\sigma_{\min}^{(1)}(T') \ge 1/m^{ c/c_1}$
	\item \emph{Transition matrix does not have short cycles:} For both $T$ and $T'$, every state visits at least $10 \log_m n$ states in $15 \log_m n$ time except with probability $\delta_1 \le 1/n^c$.
	\item \emph{All hidden states have small ``degree'':} There exists $\delta_2$ such that for every hidden state $i$, the transition distributions $T_i$ and $T'_i$ have cumulative mass at most $\delta_2$ on all but $d$ states, with $d \le m^{1/c_2}$ and $\delta_2 \le 1/n^c$. Hence this is a soft ``degree'' requirement.
  \item \emph{Output distributions are random and have small support 
  	:} There exists $\delta_3$ such that for every hidden state $i$ the output distribution $O_i$ has cumulative mass at most $\delta_3$ on all but $k$ outputs, with $k\le m^{1/c_3}$ and $\delta_3 \le 1/n^c$. Also, the output distribution $O_i$ is drawn uniformly on these $k$ outputs.
\end{enumerate}

The constants $c_1, c_2, c_3$ are can be made explicit, for example, $c_1 = 20, c_2 = 16$ and $c_3 = 10$ works. Under these conditions, we show that HMMs can be learned using polynomially many samples:

\begin{restatable}{thm}{learn}\label{thm:learnability}
	If an HMM satisfies the above conditions, then with high probability over the choice of $O$, the parameters of the HMM are learnable to within additive error $\epsilon$ with observations over windows of length $2\tau +1, \tau = 15\log_m n$, with the sample complexity $\poly{n,1/\epsilon}$.
\end{restatable}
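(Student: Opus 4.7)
The proof plan is to invoke Algorithm~\ref{alg:learn_tensor} on the empirical moment tensor and rely on standard perturbation analysis for simultaneous diagonalization. Since $M = A \otimes B \otimes C$ with $A = A^{(\tau)}$ and $B = B^{(\tau)}$ the $\tau$-step likelihood matrices for $T$ and its time-reversal $T'$, the entire theorem reduces to proving (i) $\sigma_{\min}(A^{(\tau)}) \ge 1/\poly{n}$, (ii) the analogous bound for $B^{(\tau)}$, and (iii) that the columns of $C$ (which equal $\pi_i O_i$) are pairwise non-collinear with a $1/\poly{n}$ margin. Item (iii) is routine: the random sparse supports of the $O_i$ together with a union bound over pairs of hidden states rule out near-collinearity with high probability. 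Given (i)--(iii), $\poly{n,1/\epsilon}$ empirical samples suffice to estimate $M$ entrywise by Hoeffding, and the well-conditioning of the factors translates tensor error into parameter error polynomially.

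The heart of the proof is (i); (ii) is entirely symmetric by replacing $T$ with $T'$. Because all relevant matrices live in $\Real^{m^\tau}$ with $m^\tau = n^{15}$, $\ell_1$ and $\ell_2$ minimum singular values are equivalent up to a $\poly{n}$ factor, so it suffices to bound $\sigma_{\min}^{(1)}(A^{(\tau)}) \ge 1/\poly{n}$. Unfolding the recursion $A^{(t)} = (O \odot A^{(t-1)}) T$, $A^{(0)} = OT$, naively invites a smoothed-analysis argument \emph{\`a la} \citet{bhaskara2014smoothed}, but the same $O$ is reused at every level, so fresh-randomness arguments do not apply. I would instead argue by contradiction using a coupling of Markov chains. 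Suppose $\sigma_{\min}^{(1)}(A^{(\tau)}) < \eta$; then there exist disjointly supported probability vectors $p^+, p^-$ on hidden states such that the induced distributions over $\tau$-length output strings starting from $p^\pm$ are within total-variation distance $\eta$.

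The central lemma would state: with high probability over the random draw of $O$, for every such pair $(p^+, p^-)$ whose $\tau$-step output distributions are $\eta$-close in TV, the hidden-state distributions at time $\tau$ satisfy $\|T^\tau p^+ - T^\tau p^-\|_1 \le \poly{n} \cdot \eta$. The intuition is that conditional on any fixed output string $l_1^\tau$ of non-negligible probability, the bounded-degree and no-short-cycles conditions mean only $\poly{n}$ hidden trajectories of length $\tau = 15 \log_m n$ are plausible, and every one of them visits at least $10 \log_m n$ distinct states. Combined with each $O_i$ being a random distribution on $k \le m^{1/c_3}$ random outputs, this makes distinct hidden trajectories produce distinguishable output strings with exponentially high probability over $O$; any two trajectories consistent with the same observed string must then coincide in hidden-state space at time $\tau$, inducing the coupling. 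The degree condition keeps the requisite union bound, over $d^{2\tau} \le n^{30/c_2}$ trajectory pairs, at $\poly{n}$ size.

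Combining the coupling lemma with the $\ell_1$ minimum-singular-value hypothesis on $T$ closes the argument: since $\|p^+ - p^-\|_1 = 2$, we have $\|T^\tau(p^+ - p^-)\|_1 \ge 2 (\sigma_{\min}^{(1)}(T))^\tau \ge 2 m^{-c\tau/c_1} = 2 n^{-15c/c_1}$, which is $\Omega(1/\poly{n})$; the coupling lemma then forces $\eta = \Omega(1/\poly{n})$, contradicting our assumption. The hardest step will be the coupling lemma itself: the reuse of $O$ across levels forbids a clean level-by-level analysis, and quantifying how output-level closeness propagates back to hidden-state-level closeness requires delicate control of conditional trajectory distributions. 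The high-probability statement over $O$ will be established by first reducing to a $\poly{n}$-size $\epsilon$-net of initial distributions (using the degree hypothesis to restrict attention to distributions supported on $\poly{n}$ trajectories) and then invoking anti-concentration for the sparse random columns of $O$ to rule out trajectory collisions.
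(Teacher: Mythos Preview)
Your architecture matches the paper's proof almost exactly: reduce to $\sigma_{\min}^{(1)}(A^{(\tau)}) \ge 1/\poly{n}$, establish a coupling lemma that converts TV-closeness of output distributions into $\ell_1$-closeness of $T^\tau p^\pm$, and close with $(\sigma_{\min}^{(1)}(T))^\tau$. The union bound over the at most $n^2 d^{2\tau}$ pairs of degree-restricted trajectories, combined with the random sparse supports of the $O_i$, is precisely Lemma~\ref{determine} in the paper.

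The one step that would not work as written is your plan to secure the high-probability event over $O$ via a $\poly{n}$-size $\epsilon$-net of initial distributions $p^\pm$. The simplex of initial distributions is $(n-1)$-dimensional, so no such net exists, and the degree hypothesis constrains trajectories from a fixed start, not the set of starting distributions. The paper sidesteps this entirely: the only randomness over $O$ is spent on a union bound over \emph{trajectory pairs} (of which there are $\poly{n}$ by the degree bound), establishing that every pair of non-intersecting length-$\tau$ sequences in the high-probability set $\mathcal{S}$ has \emph{disjoint} output support. Once this deterministic property of $O$ is fixed, the coupling argument holds for every $x = x^+ - x^-$ simultaneously. Concretely, the paper couples at the level of ``augmented sequences'' (trajectory, output-string) pairs: if two augmented sequences from $p^+$ and $p^-$ emit the same string, the disjoint-support property forces their hidden-state trajectories to intersect at some time $t \le \tau$ (not necessarily at time $\tau$), and from that meeting point the mass can be coupled greedily. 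The residual uncoupled mass is then exactly $\|A_1' x^+ - A_1' x^-\|_1$ up to $O(\tau\delta)$ slack, which upper-bounds $\|T^\tau x^+ - T^\tau x^-\|_1$. So replace the $\epsilon$-net with a union bound over trajectory pairs and let the coupling run for arbitrary $x$; the rest of your outline goes through.
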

\noindent \emph{Proof sketch.} We refer the reader to Section \ref{coupling} for the high level idea. Here, we provide a proof sketch for a much simpler case than that considered in Theorem \ref{thm:learnability} (also see Fig. \ref{fig:coupling}). Recall that our main goal is to show that the likelihood matrix $A$ is well-conditioned. Assume for simplicity that the output distribution of each hidden state is deterministic so the output distribution only has support on one of the $m$ character. The character on which the output distribution of each hidden state is supported is assigned independently and uniformly at random from the output alphabet. Also assume that $\delta_1, \delta_2, \delta_3$ in the conditions for Theorem \ref{thm:learnability} are zero. Our proof steps are roughly as follows--
\begin{enumerate}
	\item Consider two random walks $m_1$ and $m_2$ on $T$ starting at disjoint sets of hidden states at time 0.
	\item We first show that any two sample paths of a random walk on $T$ over $\tau = 15\log_m n$ time steps, both of which visit $10\log_m n$ different states in $\tau$ time steps but never meet in $\tau$ time steps, emit a different sequence of observations with high probability over the randomness in $O$.
	\item Using the fact that the degree of each hidden state is small, we perform a union bound over all possible sample paths to show that with high probability over the choice of $O$, any two sample paths which do not meet in $\tau$ time steps emit a different sequence of observations.
	\item Consider any 2 sample paths $s_1$ and $s_2$ corresponding to the random walks $m_1$ and $m_2$ which emit the same sequence of observations $w$ over $\tau$ time steps. By point 3 above, they must meet at some time $t$. If the probability of emitting $w$ under the random walks $m_1$ and $m_2$ are $p_1$ and $p_2$ respectively and $p_1>p_2$, then we show that $(p_1-p_2)$ of the probability mass in $m_1$ can be coupled with $m_2$ as these sample paths intersect sample paths from $m_2$. This is the core of the argument. Also refer to Fig. \ref{fig:coupling}.
	\item Hence if the probability of emitting a sequence of observations $w$ under the random walks $m_1$ and $m_2$ is very similar for every sequence $w$, then there is a very good coupling of the random walks, which implies that the total variational distance between the distribution of the random walks after $\tau$ time steps must be small. But this is a contradiction as $(\sigma_{\min}^{(1)}(T))^\tau$ is large. The contradiction stems from the fact that the $\ell_1$ distance between $m_1$ and $m_2$ at time 0 is one (as they start at disjoint starting states) and hence the distance at time $\tau$ is at least $ (\sigma_{\min}^{(1)}(T))^\tau $.
\end{enumerate}\hfill $\square$
\begin{figure}[h]
	\centering
	\includegraphics[height=1.5in]{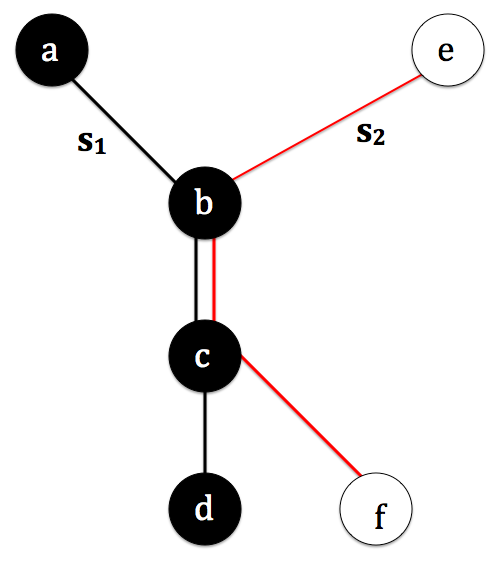}
	\caption{Consider two random walks $m_1$ and $m_2$ for 4 time steps with disjoint starting states and with sample paths $s_1$ and $s_2$ which visits the states \emph{\{a,b,c,d\}} and \emph{\{e,b,c,f\}} at times $\{0,1,2,3\}$ respectively. We show that any two sample paths that have the same output distribution must be at the same hidden state at some time step. For example, here $s_1$ and $s_2$ are simultaneously at states \emph{b} and \emph{c}. This means that the probability mass in the two random walks can be coupled, hence the variational distance between the random walks $m_1$ and $m_2$ must be small at the end. But this cannot be the case as $T$ is well-conditioned. Hence most sample paths of $m_1$ and $m_2$ must have different output distributions, which means that random walks $m_1$ and $m_2$ which start at disjoint states must have different output distributions, which implies that $A$ is well-conditioned.}
	\label{fig:coupling}
\end{figure}

Appendix \ref{sec:learning_proof} also states a corollary of Theorem \ref{thm:learnability} in terms of the minimum singular value $\sigma_{\min}(T)$ of the matrix $T$, instead of $\sigma_{\min}^{(1)}(T)$. We discuss the conditions for Theorem \ref{thm:learnability} next, and subsequently provide examples of HMMs which satisfy these conditions. 


\subsection{Discussion of the assumptions}\label{subsec:assumptions}


\begin{figure}
	\centering
	\begin{subfigure}{0.4 \textwidth}
		\includegraphics[height=1.5in]{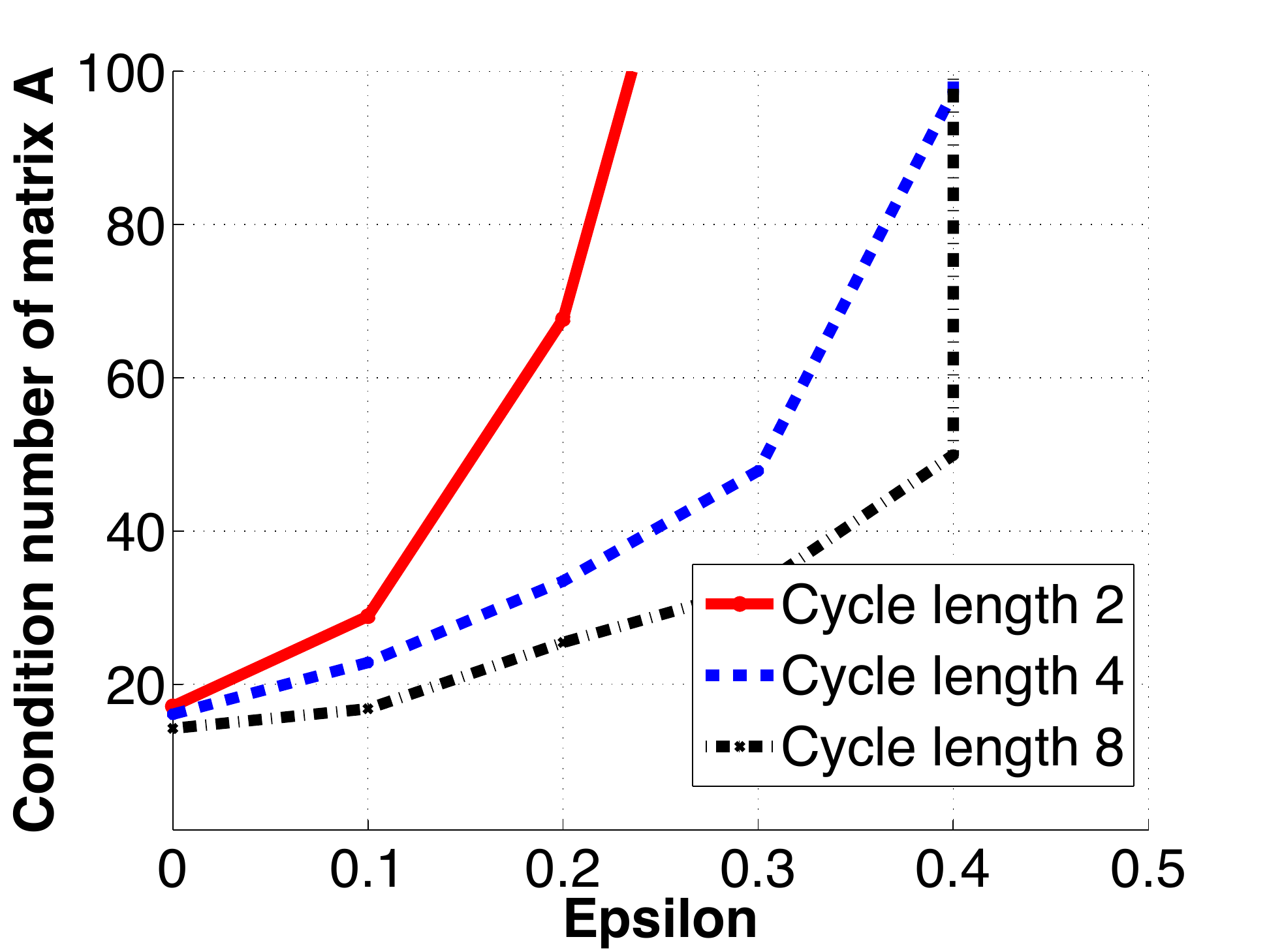}
		\caption{The conditioning becomes worse\\ when cycles are smaller or when more\\ probability mass $\epsilon$ is put on short cycles.}
		\label{fig:cycle_cond}
	\end{subfigure}
	\begin{subfigure}{0.4 \textwidth}
		\includegraphics[height=1.5 in]{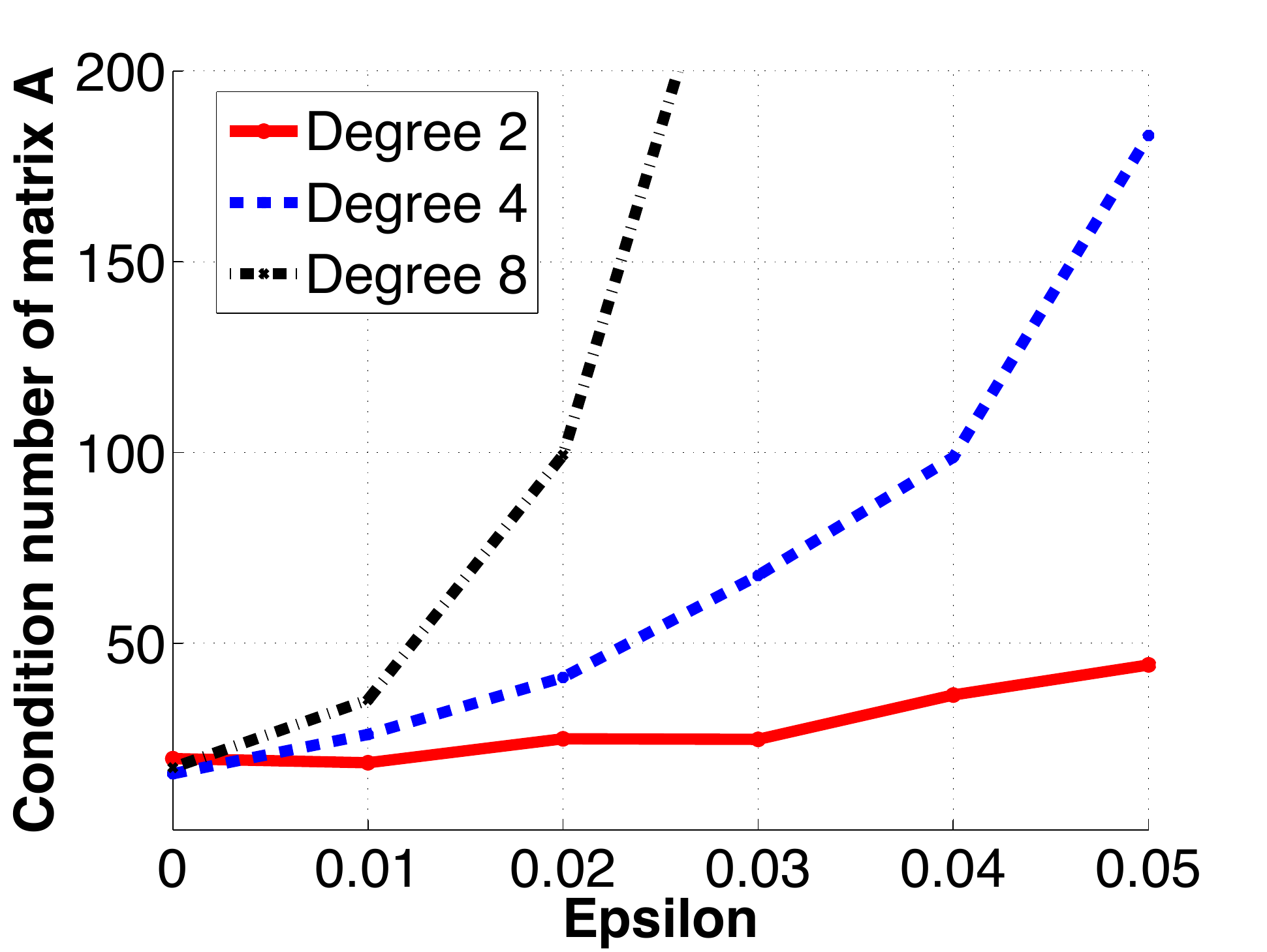}
		\caption{The conditioning becomes worse as the degree increases, and when more probabiltiy mass $\epsilon$ is put on the dense part of $T$.}
		\label{fig:degree_cond}
	\end{subfigure}
	\caption{Experiments to study the effect of sparsity and short cycles on the learnability of HMMs. The condition number of the likelihood matrix $A$ determines the stability or sample complexity of the method of moments approach. The condition numbers are averaged over 10 trials.}
\end{figure}

	
		\,1.\, \emph{Transition matrix is well-conditioned:}
		Note that singular transition matrices might not even be identifiable. Moreover, \citet{mossel2005learning} showed that learning HMMs with singular transition matrices is as hard as learning parity with noise, which is widely conjectured to be computationally hard. Hence, it is necessary to exclude at least some classes of ill-conditioned transition matrices.
	
	\,2.\, \emph{Transition matrix does not have short cycles:} Due to Proposition \ref{lem:iden_lower_bnd}, we know that a HMM might not even be identifiable from short windows if it is composed of a union of short cycles, hence we expect a similar condition for learning the HMM with polynomial samples; though there is a gap between the upper and lower bounds in terms of the probability mass which is allowed on the short cycles. We performed some simulations to understand how the length of cycles in the transition matrix and the probability mass assigned to short cycles affects the condition number of the likelihood matrix $A$; recall that the condition number of $A$ determines the stability of the method of moments approach. We take the number of hidden states $n=128$, and let $P_{128}$ be a cycle on the $n$ hidden states (as in Fig. \ref{fig:example2}). Let $P_c$ be a union of short cycles of length $c$ on the $n$ states (refer to Fig. \ref{fig:nonex2} for an example). We take the transition matrix to be $T=\epsilon P_c+(1-\epsilon) P_{128}$ for different values of $c$ and $\epsilon$. Fig. \ref{fig:cycle_cond} shows that the condition number of $A$ becomes worse and hence learning requires more samples if the cycles are shorter in length, and if more probability mass is assigned to the short cycles, hinting that our conditions are perhaps not be too stringent.
	
	\,3.\, \emph{All hidden states have a small degree:} Condition 3 in Theorem \ref{thm:learnability} can be reinterpreted as saying that the transition probabilities out of any hidden state must have mass at most $1/n^{1+c}$ on any hidden state except a set of $d$ hidden states, for any $c>0$. While this soft constraint is weaker than a hard constraint on the degree, it natural to ask whether any sparsity is necessary to learn HMMs. As above, we carry out simulations to understand how the degree affects the condition number of the likelihood matrix $A$. We consider transition matrices on $n=128$ hidden states which are a combination of a dense part and a cycle. Define $P_{128}$ to be a cycle as before. Define $G_{d}$ as the adjacency matrix of a directed regular graph with degree $d$. We take the transition matrix $T=\epsilon G_d +(1-\epsilon d)P_{128}$. Hence the transition distribution of every hidden state has mass $\epsilon$ on a set of $d$ neighbors, and the residual probability mass is assigned to the permutation $P_{128}$. Fig. \ref{fig:degree_cond} shows that the condition number of $A$ becomes worse as the degree $d$ becomes larger, and as more probability mass $\epsilon$ is assigned to the dense part $G_d$ of the transition matrix $T$, providing some weak  evidence for the necessity of Condition 3. Also, recall that Theorem \ref{thm:lower_bound} shows that HMMs where the transition matrix is a random walk on an undirected regular graph with large degree (degree polynomial in $n$) cannot be learned using polynomially many samples if $m$ is a constant with respect to $n$. However, such graphs have all eigenvalues except the first one to be less than $O(1/\sqrt{n})$, hence it is not clear if the hardness of learning depends on the large degree itself or is only due to $T$ being ill-conditioned. More concretely, we pose the following open question:
	
		\textbf{Open question:} Consider an HMM with a transition matrix $T=(1-\epsilon)P+\epsilon U$, where $P$ is the cyclic permutation on $n$ hidden states (such as in Fig. \ref{fig:example2}) and $U$ is a random walk on a undirected, regular graph with large degree (polynomial in $n$) and $\epsilon>0$ is a constant. Can this HMM be learned using polynomial samples when $m$ is small (constant) with respect to $n$? This example approximately preserves $\sigma_{\min}(T)$ by the addition of the permutation, and hence the difficulty is only due to the transition matrix having large degree.
	
	
	 \,4.\,\emph{Output distributions are random and have small support:} As discussed in the introduction, if we do not assume that the observation matrices are random, then even simple HMMs with a cycle or permutation as the transition matrix might require long windows even to become identifiable, see Fig. \ref{fig:worsrt_O}. Hence some assumptions on the output distribution do seem necessary for learning the model from short time windows, though our assumptions are probably not tight. For instance, the assumption that the output distributions have a small support makes learning easier as it leads to the outputs being more discriminative of the hidden states, but it is not clear that this is a necessary assumption. Ideally, we would like to prove our learnability results under a \emph{smoothed} model for $O$, where an adversary is allowed to see the transition matrix $T$ and pick any worst-case $O$, but random noise is then added to the output distributions, which limits the power of the adversary. We believe our results should hold under such a smoothed setting, but set this aside for future work.

	
	\begin{figure}[h]
		\centering
		\centering
		\includegraphics[height=1in]{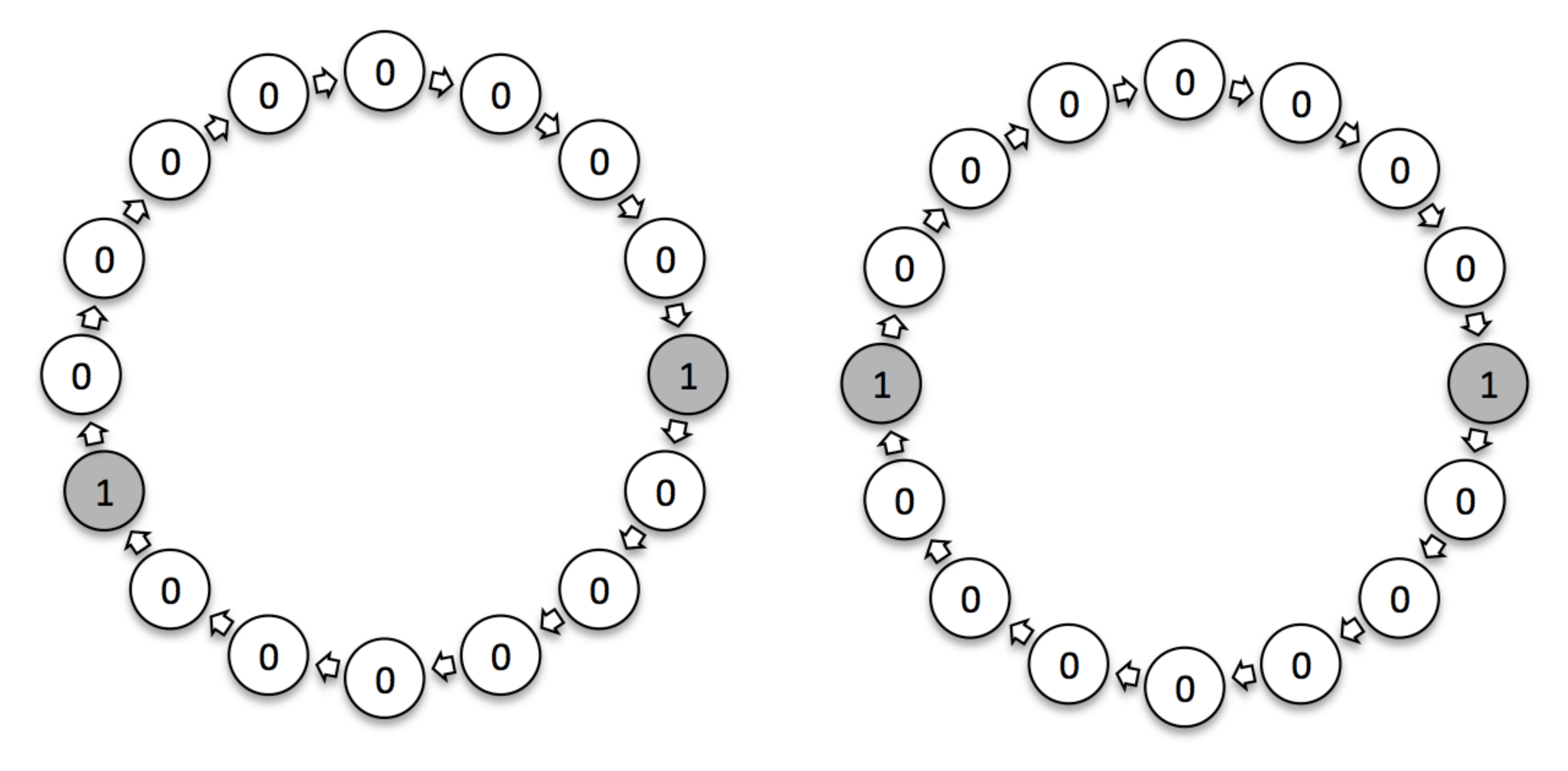}
		\caption{Consider two HMMs with transition matrices being cycles on $n=16$ states with binary outputs, and outputs conditioned on the hidden states are deterministic. The states labeled as 0 always emit a 0 and the states labeled as 1 always emit a 1. The two HMMs are not distinguishable from windows of length less than 8. Hence with worst case $O$ even simple HMMs like the cycle could require long windows to even become identifiable.}
		\label{fig:worsrt_O}
	\end{figure}
	
	
\subsection{Examples of transition matrices which satisfy our assumptions}\label{subsec:examples}

We revisit the examples from Fig. \ref{fig:example2} and Fig. \ref{fig:example1}, showing that they satisfy our assumptions. 

	\,1.\, \emph{Transition matrices where the Markov Chain is a permutation:} If the Markov chain is a permutation with all cycles longer than $10\log_m n $ then the transition matrix obeys all the conditions for Theorem \ref{thm:learnability}. This is because all the singular values of a permutation are 1, the degree is 1 and all hidden states visit $10\log_m n $ different states in $15\log_m n $ time steps.
	
	\,2.\, \emph{Transition matrices which are random walks on graphs with small degree and large girth:} For directed graphs, Condition 2 can be equivalently stated as that the graph representation of the transition matrix has a large girth (girth of a graph is defined as the length of its shortest cycle). 
	
		
	
	\,3.\, \emph{Transition matrices of factorial HMMs:} Factorial HMMs \cite{ghahramani1997factorial} 
	factor the latent state at any time into $D$ dimensions, each of which independently evolves according to a Markov process (see Fig. \ref{fig:factorial}). For $D=2$, this is equivalent to saying that the hidden states are indexed by two labels $(i,j)$ and if $T_1$ and $T_2$ represent the transition matrices for the two dimensions, then $\Prob[(i_1,j_1)\rightarrow(i_2,j_2)]= T_1(i_2,i_1)T_2(j_2,j_1)$. This naturally models settings where there are multiple latent concepts which evolve independently. The following properties are easy to show:
\vspace{-4pt}
		\begin{enumerate}
		\item If either of $T_1$ or $T_2$ visit $N$ different states in $15\log_m n $ time steps with probability $(1-\delta)$, then $T$ visits $N$ different states in $15\log_m n $ time steps with probability $(1-\delta)$.
		\item $\sigma_{\min}(T) = \sigma_{\min}(T_1)\sigma_{\min}(T_2)$
		\item If all hidden states in $T_1$ and $T_2$ have mass at most $\delta$ on all but $d_1$ states and $d_2$ states respectively, then $T$ has mass at most $2\delta$ on all but $d_1d_2$ states.
	\end{enumerate}
	\vspace{-4pt}
	Therefore, factorial HMMs are learnable with random $O$ if the underlying processes obey conditions similar to the assumptions for Theorem \ref{thm:learnability}. If both $T_1$ and $T_2$ are well-conditioned and at least one of them does not have short cycles, and either has small degree, then $T$ is learnable with random $O$.

	\begin{figure}[h]
			\centering
			\includegraphics[height=1.5in]{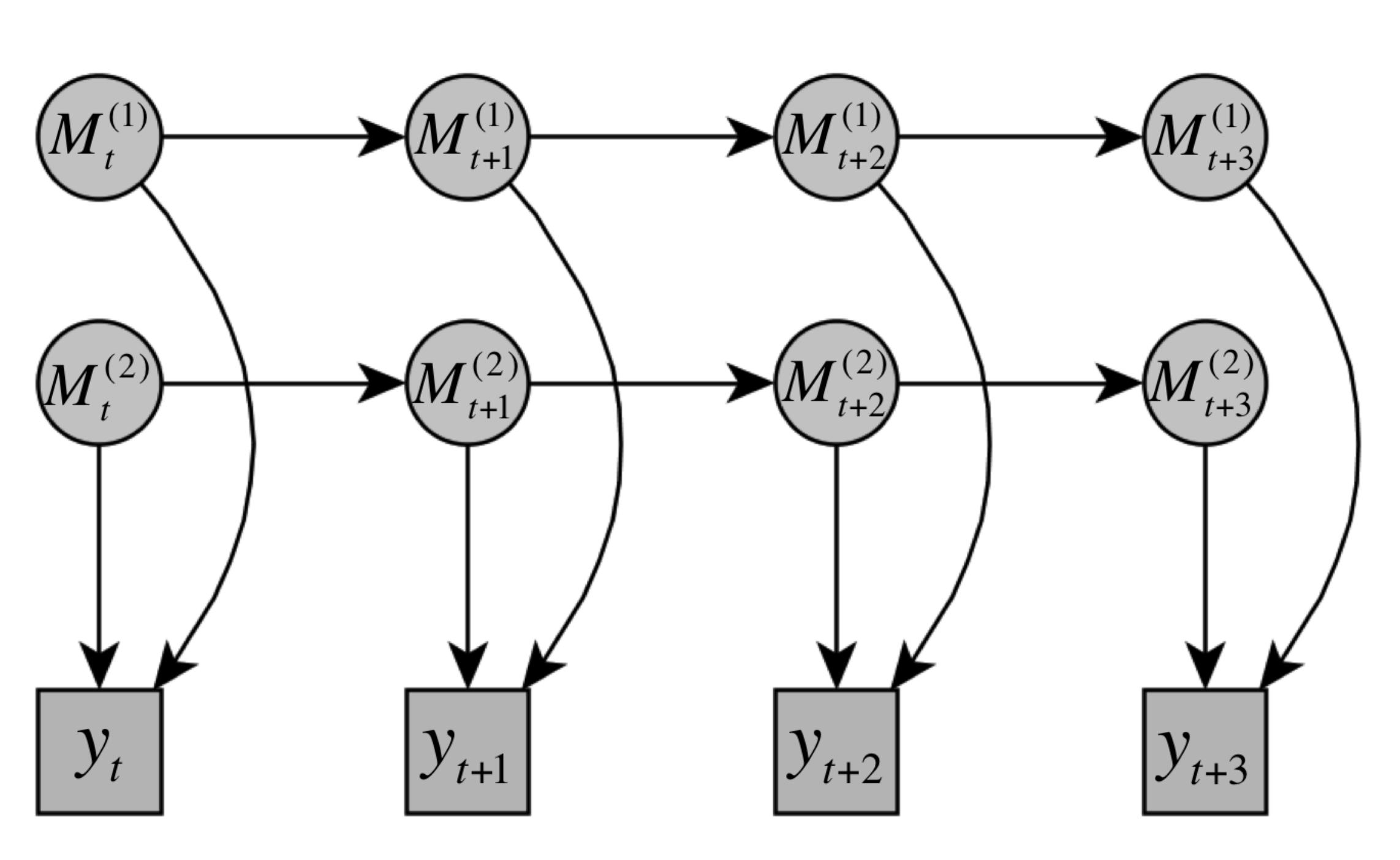}
			\caption{Graphical model for a factorial HMM for $D=2$. The Markov chains $M^{(1)}$ and $M^{(2)}$ evolve independently, and the output at any time step is only dependent on the current states of the two Markov chains at that time step. Our conditions for learning such transition matrices transfer cleanly to conditions on the transition matrices of the underlying Markov chains $M^{(1)}$ and $M^{(2)}$.}
			\label{fig:factorial}
	\end{figure}

%


\vspace{-8pt}
\section{Identifiability of HMMs from short windows}\label{sec:identifiability}
\vspace{-8pt}
As it is not obvious that some of the requirements for Theorem \ref{thm:learnability} are necessary, it is natural to attempt to derive stronger results for just identifiability of HMMs having structured transition matrices. In this section, we state our results for identifiability of HMMs from windows of size $\Oh{\log_m n}$.  \citet{huang14hmm} showed that all HMMs except those belonging to a measure zero set become identifiable from windows of length $2\tau + 1$ with $\tau=8\lceil \log_m n\rceil$. However, the measure zero set itself might possibly contain interesting classes of HMMs (see Fig. \ref{fig:examples}), for example sparse HMMs also belong to a measure zero set. We refine the identifiability results in this section, and show that a natural sparsity condition on the transition matrix guarantees identifiability from short windows. Given any transition matrix $T$, we regard $T$ as being supported by a set of indices $\mathcal{S}$ if the non-zero entries of $T$ all lie in $\mathcal{S}$. We now state our result for identifiability of sparse HMMs.

\begin{restatable}{thm}{identifiability}\label{thm:identifiability}
  Let $\mathcal{S}$ be a set of indices which supports a permutation where all cycles have at least $2\lceil \log_m n \rceil$ hidden states. Then the set $\mathcal{T}$ of all transition matrices with support $\mathcal{S}$ is identifiable from windows of length $4\lceil \log_m n\rceil +1$ for all observation matrices $O$ except for a measure zero set of transition matrices in $\mathcal{T}$ and observation matrices $O$.
\end{restatable}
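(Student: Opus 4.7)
The plan is to apply Kruskal's uniqueness theorem to the moment tensor $M = A \otimes B \otimes C$ of Section \ref{sec:technical_contri}, built from windows of length $2\tau+1$ with $\tau = 2\lceil \log_m n \rceil$. Identifiability will follow once we verify that, except on a measure-zero subset of $\mathcal{T} \times \Omega$ (where $\Omega$ denotes the space of column-stochastic $m \times n$ observation matrices), the likelihood matrices $A, B \in \Real^{m^\tau \times n}$ both have full column rank $n$ and no two columns of $C \in \Real^{m \times n}$ are parallel. Since every entry of $A$, $B$, and $C$ is a polynomial in the entries of $T$ and $O$, the locus where any of these conditions fails is the vanishing set of a finite collection of polynomials: the $n \times n$ minors of $A$ and of $B$, together with the $2 \times 2$ minors of each pair of columns of $C$. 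To show this locus has measure zero in the full-dimensional semi-algebraic parameter space $\mathcal{T} \times \Omega$, it suffices to exhibit one witness point $(T^*, O^*) \in \mathcal{T} \times \Omega$ at which all three conditions hold simultaneously; then each relevant polynomial is not identically zero on the parameter space, so its zero set is a proper real algebraic subvariety and has Lebesgue measure zero.

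For the witness I would take $T^* \in \mathcal{T}$ to be the $0/1$ permutation matrix $P_\pi$ of the permutation $\pi$ guaranteed by the hypothesis on $\mathcal{S}$, and take $O^*$ to be a deterministic observation matrix in which each hidden state $i$ emits a fixed label $\ell(i) \in \{0, \ldots, m-1\}$. Under $(T^*, O^*)$ the HMM evolves deterministically: starting from state $i$ it emits the string $w_i = \ell(\pi(i))\,\ell(\pi^2(i)) \cdots \ell(\pi^\tau(i))$, and the $i$th column of $A$ equals the standard basis vector $e_{L(w_i)}$. Hence $A$ has full column rank precisely when the strings $w_1, \ldots, w_n$ are pairwise distinct, in which case the $n \times n$ submatrix of $A$ on rows $\{L(w_i) : i \in [n]\}$ is a permutation matrix with determinant $\pm 1$. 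The same construction works for $B$ by applying the argument to the time-reversed chain, whose support and cycle decomposition match those of $\pi$ reversed. For condition (iii), at this $(T^*, O^*)$ the $i$th column of $C$ equals $\pi_i \cdot e_{\ell(i)}$, so two columns are parallel only when two states share a label; an arbitrarily small perturbation of $O^*$ breaks any such parallelism while preserving conditions (i) and (ii) by continuity of the minors.

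The main obstacle is exhibiting a labeling $\ell \colon [n] \to \{0, \ldots, m-1\}$ under which all $w_i$ are distinct. For a pair $(i, j)$ lying in different cycles of $\pi$, or in the same cycle with relative shift $s$ satisfying $s + \tau \le L$ (so the two $\tau$-windows cover disjoint positions along the cycle), a uniformly random $\ell$ has $\Prob[w_i = w_j] = m^{-\tau} \le n^{-2}$, and a union bound over the at most $\binom{n}{2}$ such pairs gives expected collision count at most $1/2$. The subtle case is two states in a common short cycle of length $L \in [\tau, 2\tau)$ at an overlapping shift $s$, where equations of the form $\ell_{a+k} = \ell_{a+s+k}$ chain together through periodicity and can inflate the collision probability to $m^{-(\tau - \gcd(L, s))}$; however, such short cycles contain only $L = \Oh{\log_m n}$ states and contribute at most $\Oh{(\log_m n)^2}$ problematic pairs in total, so their expected contribution to the collision count remains $o(1)$. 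Alternatively, since $m^\tau \ge n^2 \gg L$, one can construct the labeling cycle-by-cycle using initial segments of de Bruijn sequences, guaranteeing distinct $\tau$-windows within each cycle, and then enforce cross-cycle distinctness by reserving disjoint label patterns for different cycles. Either route furnishes a valid labeling and hence the desired witness, after which Kruskal's theorem together with Algorithm \ref{alg:learn_tensor} recovers $T$ and $O$ uniquely up to the inherent permutation of hidden-state indices.
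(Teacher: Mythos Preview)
Your approach is the same as the paper's: take $T^*$ to be the permutation supported by $\mathcal{S}$, take $O^*$ deterministic, and use a union bound over random labelings to produce a witness at which $A$ (and $B$) have full rank, then invoke the usual ``nonzero polynomial $\Rightarrow$ measure-zero vanishing locus'' argument. You are in fact more careful than the paper in flagging the within-cycle overlap issue: the paper asserts that at every time step at least one of the two walks is at a fresh state, but this can fail when the common cycle has length $L\in[\tau,2\tau-2]$ and the shift $s$ lies in $[L-\tau+1,\tau-1]$.

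That said, your fix has its own gap. You bound the number of problematic pairs by $\Oh{(\log_m n)^2}$ \emph{in total}, but this is the count for a \emph{single} short cycle; there can be as many as $n/\tau$ cycles of length in $[\tau,2\tau)$, giving $\Theta(n\log_m n)$ problematic pairs overall. Combined with your probability bound of roughly $m^{-\tau/2}\le 1/n$, the expected contribution is $\Theta(\log_m n)$, not $o(1)$, so the union bound as you stated it does not close. A sharper per-cycle computation (summing $m^{-(L-\gcd(s,L))}$ over shifts, where the $s=L/2$ term dominates) does push the total expectation strictly below $1$, but this needs to be written out; alternatively, one can construct the labeling explicitly cycle-by-cycle using aperiodic (Lyndon) words in distinct necklace classes, of which there are roughly $m^{\tau}/\tau\ge n^2/\tau\gg n/\tau$. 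Your de~Bruijn suggestion does not work as stated, since an initial segment of a de~Bruijn sequence gives distinct \emph{linear} $\tau$-substrings but not distinct \emph{cyclic} ones once the cycle wraps around.

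One further omission: identifiability of $T$ via Algorithm~\ref{alg:learn_tensor} requires $A'=O\odot A^{(\tau-1)}$ to have full column rank, not just $A$ and $B$; the paper checks this explicitly. Your same witness handles it (the length-$\tau$ windows at the deterministic $(T^*,O^*)$ are the first output followed by the distinct length-$(\tau-1)$ windows), but you should say so. Finally, for the condition on $C$ you need not perturb $O^*$: since each ``columns $i,j$ parallel'' locus is a separate subvariety, it suffices to exhibit any $O$ with $O_i,O_j$ non-parallel to show the corresponding polynomial is not identically zero.
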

\noindent \emph{Proof sketch.} Recall from Section \ref{sec:technical_contri} that the main task is to show that the likelihood matrix $A$ is full rank. The proof uses basic algebraic geometry, and the main idea used is analogous to the following fact about polynomials: either a polynomial is a zero polynomial or it has finitely many roots which will lie in a measure zero set. The determinant of the likelihood matrix $A$ (or of sub-matrices of $A$ if $A$ is rectangular) is a polynomial in the entries of $T$ and $O$, hence we only need to show that the polynomial is not a zero polynomial. To show that a polynomial is not a zero polynomial, it is sufficient to find one instance of the variables which makes the polynomial non-zero. Hence we only need to find some particular $T$ and $O$ such that the determinant is not 0. We find such a $T$ and $O$ using the fact that $\mathcal{S}$ supports a permutation which does not have short cycles.\hfill $\square$

\vspace{-4pt}
We hypothesize that excluding a measure zero set of transition matrices in Theorem \ref{thm:identifiability} should not be necessary as long as the transition matrix is full rank, but are unable to show this. Note that our result on identifiability is more flexible in allowing short cycles in transition matrices than Theorem \ref{thm:learnability}, and is closer to the lower bound on identifiability in Proposition \ref{lem:iden_lower_bnd}.

We also strengthen the result of \citet{huang14hmm} for identifiability of generic HMMs. \citet{huang14hmm} conjectured that windows of length $2\lceil \log_m n\rceil +1$ are sufficient for generic HMMs to be identifiable. The constant 2 is the information theoretic bound as an HMM on $n$ hidden states and $m$ outputs has $\Oh{n^2+nm}$ independent parameters, and hence needs observations over a window of size $2\lceil \log_m n\rceil +1$ to be uniquely identifiable. Proposition \ref{thm:special_identifiability} settles this conjecture, proving the optimal window length requirement for generic HMMs to be identifiable. As the number of possible outputs over a window of length $t$ is $m^t$, the size of the moment tensor in Section \ref{sec:technical_contri} is itself exponential in the window length. Therefore even a factor of 2 improvement in the window length requirement leads to a quadratic improvement in the sample and time complexity. 

\begin{restatable}{prop}{tightwindow}\label{thm:special_identifiability}
	The set of all HMMs is identifiable from observations over windows of length $2\lceil \log_m n\rceil +1$ except for a measure zero set of transition matrices $T$ and observation matrices $O$.
\end{restatable}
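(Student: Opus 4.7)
The plan is to mirror the algebraic-geometric strategy used for Theorem \ref{thm:identifiability}, but with a construction sharp enough to match the tight window length $\tau = \lceil \log_m n \rceil$. Using the decomposition of Section \ref{sec:technical_contri}, the moment tensor over a window of length $2\tau + 1$ factors as $M = A \otimes B \otimes C$ with $A, B \in \Real^{m^\tau \times n}$ and $C \in \Real^{m \times n}$. Kruskal's theorem guarantees this factorization uniquely recovers $A, B, C$ (and hence $T, O$ up to the usual label permutation) whenever (i) $A$ has full column rank $n$, (ii) $B$ has full column rank $n$, and (iii) no two columns of $C$ are parallel. Each condition asserts the nonvanishing of a polynomial in the entries of $(T,O)$: a fixed $n\times n$ minor of $A$, a fixed $n\times n$ minor of $B$, and a product over pairs of $2\times 2$ minors of $C$. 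Their product is a single polynomial $P(T,O)$ whose vanishing locus has Lebesgue measure zero inside the product of simplices parametrizing valid $(T,O)$ as soon as $P$ is not identically zero, so it suffices to exhibit, for each factor of $P$, one valid stochastic witness making that factor nonzero.

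The delicate factor is (i), since $m^\tau \ge n$ leaves no slack. I would take $T^\ast$ to be the column-stochastic matrix of a single cyclic permutation $1 \to 2 \to \cdots \to n \to 1$ (ergodic, uniform stationary distribution), and $O^\ast$ to be the deterministic observation matrix $O^\ast_{f(i),i} = 1$ for a labeling $f:[n] \to \{0,\ldots,m-1\}$ with the property that the $n$ cyclic length-$\tau$ windows $(f(i+1), f(i+2), \ldots, f(i+\tau))$ are pairwise distinct. With this choice, column $i$ of $A$ is the indicator of the length-$\tau$ string emitted starting from state $i$, so the $n$ columns are $n$ distinct standard basis vectors of $\Real^{m^\tau}$; the $n\times n$ submatrix of $A$ indexed by those rows is the identity and has determinant $1$. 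The same argument applies to $B$, since the time-reversed chain is the inverse cyclic permutation and hence also a single cycle. For factor (iii), any $(T,O)$ with entries generic in $(0,1)$ gives a $C$ with pairwise nonparallel columns, and this witness is allowed to be different from the one used for (i) and (ii).

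The one genuinely combinatorial step is the existence of the labeling $f$ exactly at the information-theoretic threshold. For $n = m^\tau$ any $m$-ary de Bruijn sequence of order $\tau$ provides $f$ directly; for $n < m^\tau$ one needs a closed walk in the $m$-ary de Bruijn graph on $(\tau-1)$-grams (in-degree and out-degree $m$ at every vertex, $m^\tau$ edges in total) that traverses exactly $n$ distinct edges. I would produce such a walk either by splicing together short cycles whose edge counts sum to $n$ (for instance, combining the $m$ self-loops $a^\tau$ with longer shifts) or, more systematically, by invoking the fact that any connected submultigraph in which every vertex is balanced admits an Eulerian circuit and then building such a submultigraph with exactly $n$ edges. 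This combinatorial construction is the only place where the tight threshold is actually used; once $f$ is in hand, the remainder is the routine polynomial-nonvanishing argument already appearing in the proof of Theorem \ref{thm:identifiability}, and it is precisely what sharpens the Huang--Kakade bound of $8\lceil \log_m n\rceil$ down to the conjectured $2\lceil \log_m n\rceil + 1$.
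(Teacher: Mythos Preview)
Your proposal is correct and follows essentially the same route as the paper: take $T$ to be a single $n$-cycle, take $O$ deterministic via a labeling whose $n$ cyclic length-$\tau$ windows are pairwise distinct so that $A$ (and likewise $B$, $A'$) has $n$ distinct standard-basis columns, and finish with the polynomial-nonvanishing argument. The paper dispatches the existence of such a labeling for arbitrary $n\le m^\tau$ by invoking de Bruijn sequences (citing \cite{flaxman2004strings}), whereas you sketch the equivalent Eulerian-subgraph construction in the de Bruijn graph on $(\tau-1)$-grams; these are the same combinatorial fact.
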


\vspace{-12pt}
\section{Discussion on long-term dependencies in HMMs}\label{sec:long_term}
\vspace{-8pt}
In this section, we discuss long-term dependencies in HMMs, and show how our results on overcomplete HMMs improve the understanding of how HMMs can capture long-term dependencies, both with respect to the Markov chain and the outputs. Recall the definition of $\sigma_{\min}^{(1)}(T)$:
\vspace{-5pt}
\begin{align}
\sigma_{\min}^{(1)}(T) = \min_{x\in \mathbb{R}^n}\frac{\absnorm{Tx}}{\absnorm{x}}\nonumber
\end{align}
We claim that if $\sigma_{\min}^{(1)}(T)$ is large, then the transition matrix preserves significant information about the distribution of hidden states at time 0 at a future time $t$, for all initial distributions at time 0. Consider any two distributions $p_0$ and $q_0$ at time 0. Let $p_t$ and $q_t$ be the distributions of the hidden states at time $t$ given that the distribution at time 0 is $p_0$ and $q_0$ respectively. Then the $\ell_1$ distance between $p_t$ and $q_t$ is $\absnorm{p_t-q_t}\ge (\sigma_{\min}^{(1)}(T))^t \absnorm{p_0-q_0}$, verifying our claim. It is interesting to compare this notion with the mixing time of the transition matrix. Defining mixing time as the time until the $\ell_1$ distance between any two starting distributions is at most $1/2$, it follows that the mixing time $\tau_{\text{mix}}\ge 1/\log (1/\sigma_{\min}^{(1)}(T))$, therefore if $\sigma_{\min}^{(1)}(T))$ is large then the chain is slowly mixing. However, the converse is not true---$\sigma_{\min}^{(1)}(T)$ might be small even if the chain never mixes, for example if the graph is disconnected but the connected components mix very quickly. Therefore, $\sigma_{\min}^{(1)}(T)$ is possibly a better notion of the long-term dependence of the transition matrix, as it requires that information is preserved about the past state ``in all directions''.

Another reasonable notion of the long-term dependence of the HMM is the long-term dependence in the output process instead of in the hidden Markov chain, which is the utility of past observations when making predictions about the distant future (given outputs $y_{-\infty},\dots,y_1, y_2, \dots, y_t$, at time $t$ how far back do we need to remember about the past to make a good prediction about $y_t$?). This does not depend in a simple way on the $T$ and $O$ matrices, but we do note that if the Markov chain is fast mixing then the output process can certainly not have long-term dependencies. We also note that with respect to long-term dependencies in the output process, the setting $m\ll n$ seems to be much more interesting than when $m$ is comparable to $n$. The reason is that in the small output alphabet setting we only receive a small amount of information about the true hidden state at each step, and hence longer windows are necessary to infer the hidden state and make a good prediction. 
We also refer the reader to \citet{kakade2016prediction} for related discussions on the memory of output processes of HMMs.

\section{Lower bounds for learning dense, random HMMs}\label{sec:lower_bound}

In this section, we state our lower bound that it is not possible to efficiently learn HMMs where the underlying transition matrix is a random walk on a graph with a large degree and $m$ is small with respect to $n$. We actually show a stronger result than this -- we show that the number of bits of information contained in polynomial number of samples from such an HMM is a negligible fraction of the total number of bits of information needed to specify the transition matrix in these cases, showing that approximate learning is also information theoretically impossible.
\begin{restatable}{thm2}{lowerbound}\label{thm:lower_bound}
	Consider the class of HMMs with $n$ hidden states and $m$ outputs and $m=\polylog{n}$ with the transition matrix chosen to be a $d$-regular graph, with $d=n^\epsilon$ for some $\epsilon>0$. Then at least $\Omega(nd)$ bits of information are needed to specify the choice of the transition matrix. However, if the observation matrix $O$ is randomly chosen such that the columns of $O$ are chosen independently and $\E[O_{ij}]=1/m$ for all $i,j$, then the number of bits of information contained in polynomially many samples over a window of length $N=\poly{n}$ is at most $\tilde{\mathcal{O}}(n)$, with high probability over the choice of $O$, where the $\tilde{\mathcal{O}}$ notation hides polylogarithmic factors in $n$.
\end{restatable}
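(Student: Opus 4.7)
The plan splits cleanly into two disjoint pieces: a routine counting lower bound on the bits needed to specify $T$, and an information-theoretic upper bound on how many bits polynomially many length-$N$ samples can convey about $T$.

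\emph{Counting.} For $d=n^{\epsilon}$, the number of $d$-regular directed graphs on $n$ vertices is at least $\binom{n}{d}^{n}/n!$, whose logarithm is $\Omega(nd\log(n/d))=\Omega(nd)$. So specifying $T$ requires $\Omega(nd)$ bits. This dispatches the first claim in a short paragraph, after which the rest of the proof is devoted to the information bound.

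\emph{Information bound, key structure.} The crucial structural fact is that for $d=n^{\epsilon}$ a uniformly random $d$-regular graph is with high probability a strong expander: the second eigenvalue of its (normalized) transition matrix $T$ is $O(1/\sqrt{d})=n^{-\epsilon/2}$ by Friedman-type results. Consequently $T^{k}\to \pi\mathbf{1}^{\top}$ geometrically, with $\pi=\frac{1}{n}\mathbf{1}$, and the chain mixes in $O(1/\epsilon)$ steps independently of which particular $T$ in the ensemble we picked. In parallel, because the columns of $O$ are independent with $\mathbb{E}[O_{ij}]=1/m$, a Hoeffding bound applied to each row shows that the stationary output distribution $O\pi$ is concentrated coordinatewise around $\tfrac{1}{m}\mathbf{1}$ up to fluctuation $\tilde{O}(1/\sqrt{n})$, again with high probability over $O$. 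So every $T$ in the ensemble produces essentially the same one-step marginal output distribution.

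\emph{Plan of attack.} Condition on $O$, and let $Y\in[m]^{M}$ with $M=\mathrm{poly}(n)$ denote the concatenation of all polynomially many samples (samples from independent trajectories can be handled by additivity of KL across the trajectories). Use the standard bound
\[
I(T;Y\mid O)\;\le\;\mathbb{E}_{T}\bigl[D(P_{T}\,\|\,Q)\bigr],
\]
taking $Q$ to be the product distribution on $[m]^{M}$ with marginal $O\pi$. Apply the chain rule to write
\[
D(P_{T}\,\|\,Q)=\sum_{t=1}^{M}\mathbb{E}_{P_{T}}\bigl[D\bigl(P_{T}(y_{t}\mid y_{<t})\,\|\,O\pi\bigr)\bigr],
\]
and bound each per-step contribution using two ingredients: (a) geometric mixing, which implies the posterior $P_{T}(h_{t}\mid y_{<t})$ differs from $\pi$ only through a component in non-principal eigenspaces that contracts by $O(1/\sqrt{d})$ per step of evolution; and (b) concentration of $O\cdot v$ around $O\pi$ for any small-norm perturbation $v=P_{T}(h_{t}\mid y_{<t})-\pi$, exploiting column independence of $O$. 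Together these yield per-step KL contributions that are tiny on average, summing to $\tilde{O}(n)$ over the $M$ steps.

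\emph{Main obstacle.} The delicate point is handling time steps where a run of informative observations has concentrated the posterior on $h_{t}$ onto a small set, since then (b) no longer gives a strong enough concentration. The natural remedy is to partition time into blocks of length $\Theta(\log n)$ and exploit mixing at each block boundary to re-diffuse the posterior, so that sharp concentration cannot persist; the final bound $\tilde{O}(n)$ should correspond to at most $\tilde{O}(1)$ bits of information per block about $T$, accumulated across $\mathrm{poly}(n)$ blocks but suppressed by the geometric contraction within each block. Carefully tracking these contractions, and verifying the concentration of $O\cdot v$ uniformly over all posteriors $v$ that arise (which may require a union bound over an appropriate net), is the main technical burden.
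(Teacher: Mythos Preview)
Your counting argument is fine and close to the paper's.

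The information bound has a real gap. Bounding $I(T;Y\mid O)\le\mathbb{E}_T[D(P_T\,\|\,(O\pi)^{\otimes M})]$ cannot yield $\tilde O(n)$ for arbitrary $M=\mathrm{poly}(n)$: the right-hand side equals $I(T;Y\mid O)+D(\bar P\,\|\,(O\pi)^{\otimes M})$ with $\bar P=\mathbb{E}_T P_T$, and the second term scales with $M$. Even using your ingredients optimally, one application of $T$ plus column concentration of $O$ only gives $|P[y_t\mid y_{<t}]-1/m|=\tilde O(1/\sqrt{dm})$ uniformly, hence per-step KL $\tilde O(m/d)$ and total $\tilde O(Mm/d)$, which can be any polynomial in $n$. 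Your block heuristic does not repair this: with $\mathrm{poly}(n)$ blocks, even ``$\tilde O(1)$ bits per block'' is $\mathrm{poly}(n)$, not $\tilde O(n)$. Separately, ingredient (a) as stated is not justified: the forward filter evolves as $p_t\propto T\,\mathrm{diag}(O_{y_{t-1},\cdot})\,p_{t-1}$, and the diagonal factor has single-state emission entries $O_{y_{t-1},i}$ that can be near zero, so the spectral gap of $T$ alone does not force $p_t\to\pi$.

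The paper avoids both issues with a different reduction. It shows that the length-$N$ output law is determined, to error $n^{-\log^{\delta}n}$, by the length-$\tau$ marginals with $\tau=\lfloor\log_m n\rfloor$: concretely, $\bigl|P[o_t\mid o_1^{t-1}]-P[o_t\mid o_{t-\tau+1}^{t-1}]\bigr|$ is super-polynomially small for every $t$ and every realization. Since $m^\tau\le n$, polynomially many length-$\tau$ samples are specified by $\tilde O(n)$ bits, and the reduction transfers this to length-$N$ samples regardless of how large the polynomial $M$ is. The super-polynomial accuracy comes from analyzing not the forward filter but the \emph{smoothed} transition $T^{(t)}_{ij}=P[h_{t+1}{=}j\mid h_t{=}i,\,o_{t+1}^{\tau-1}]$, which factors as $O^{(t)}TE^{(t)}$ with diagonal $O^{(t)},E^{(t)}$. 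The diagonal entries here are probabilities of a \emph{future output sequence} from a given state; by $d$-regularity these are averages of $O$ over $d$ neighbors at each step and hence all equal $(1/m)^{\tau-t}(1\pm\tilde O(\sqrt{m/d}))$. So $T^{(t)}$ is a near-scalar perturbation of $T$, inherits its $O(1/\sqrt d)$ contraction on $\mathbf 1^\perp$ (Friedman), and after $\tau$ iterations the dependence on the initial posterior is super-polynomially small. Your ingredients---expansion of $T$ and concentration of $O$---are exactly the right ones; what is missing is this path-averaged conditioning that makes the diagonal perturbations uniformly well-conditioned, and the reduction to short windows that makes the final bound independent of $M$.
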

\noindent \emph{Proof sketch.}
The proof consists of two steps--in the first step we show that the information contained in polynomially many observations over windows of length $\tau=\lfloor \log_m n \rfloor$ is not sufficient to learn the HMM. The proof of this part relies on a counting argument and a lower bound on the number of random regular graphs with a given degree. We then show that the information contained in polynomial samples over longer windows is not much larger than the information contained in polynomial samples over a window length of $\tau$. This is the main technical part, and we need to show that the hidden state at time 0 does not have much influence on the hidden state at time $t$, conditioned on the outputs from time 0 to $t$. The conditioning makes this tricky, as the probabilities of the hidden states no longer evolve under the transition matrix of the Markov chain. We get around this by showing that the probability of the hidden states after conditioning on the observations evolves under a time-inhomogeneous Markov chain, and the transition matrices at every time step are related to the outputs from time 1 to $t$ and the original transition matrix. We analyze the spectrum of the time-inhomogeneous transition matrices to show that the influence of the hidden state at time 0 decays at every step and is small at time $t$. \hfill $\square$

We would like to point out that our techniques to prove the information theoretic lower bound appear to be generally useful for analyzing the influence of the hidden state at time 0 on the hidden state at time $t$, conditioned on the outputs from time 0 to $t$, This is a measure of how much value there is to observations before time 0 for predicting the observation at time $t+1$, conditioned on the intermediate observations from time 0 to $t$. This is a natural notion of the memory of the output process.
\vspace{-10pt}
\section{Conclusion and Future Work}\label{sec:conclusion}
\vspace{-5pt}

The setting where the output alphabet $m$ is much smaller than the number of hidden states $n$ is well-motivated in practice and seems to have several interesting theoretical questions about new lower bounds and algorithms. Though some of our results are obtained in more restrictive conditions than seems necessary, we hope the ideas and techniques pave the way for much sharper results in this setting. Some open problems which we think might be particularly useful for improving our understanding is relaxing the condition on the observation matrix being random to some structural constraint on the observation matrix (such as on its Kruskal rank), and more thoroughly investigating the requirement for the transition matrix being sparse and not having short cycles.

\section*{Acknowledgements}

Sham Kakade acknowledges funding from the Washington Research Foundation for Innovation in Data-intensive Discovery, and the NSF Award CCF-1637360. Gregory Valiant and Sham Kakade acknowledge funding form NSF Award CCF-1703574. Gregory was also supported by NSF CAREER Award CCF-1351108 and a Sloan Research Fellowship. 
\bibliography{references.bib,refdb/all.bib}
\bibliographystyle{abbrvnat}
\appendix

\section{Proof of lower bound for dense HMMs}\label{sec:lower_bound_proof}


\begin{thm2}
	Consider the class of HMMs with $n$ hidden states and $m$ outputs and $m=\polylog{n}$ with the transition matrix chosen to be a $d$-regular graph, with $d=n^\epsilon$ for some $\epsilon>0$. Then at least $\Omega(nd)$ bits of information are needed to specify the choice of the transition matrix. However, if the observation matrix $O$ is randomly chosen such that the columns of $O$ are chosen independently and $\E[O_{ij}]=1/m$ for all $i,j$, then the number of bits of information contained in polynomially many samples over a window of length $N=\poly{n}$ is at most $\tilde{\mathcal{O}}(n)$, with high probability over the choice of $O$, where the $\tilde{\mathcal{O}}$ notation hides polylogarithmic factors in $n$.
\end{thm2}
\begin{proof}
	By \citet{shamir1984large} (also see \citet{krivelevich2001random}), the number of $d$-regular graphs on $n$ vertices is at least ${\dbinom{\binom{n}{2}}{ nd/2}}\exp(-nd^{0.5+\delta})$ for any fixed $\delta>0$. This can be bounded from below as follows---
	\begin{align}
	{\dbinom{\binom{n}{2}}{ nd/2}}\exp(-nd^{0.5+\delta})&=\Big(\frac{n-1}{d}\Big)^{nd/2}\exp(-nd^{0.5+\delta})\nonumber\\
	&\ge 2^{nd/2-nd^{0.5+\delta}}\nonumber
	\end{align}
	Hence the number of bits needed to specify a randomly chosen $d$-regular graph on $n$ vertices is at least $\Omega(nd)$. 
	
	Note that if we only get observations over a window of length $\tau = \lfloor \log_m n \rfloor$, and we obtain $\poly{n}$ samples, then the total information in those samples is at most $\tilde{O}(n)$, where the $\tilde{O}$ hides polylogarithmic factors in $n$. This is because there are at most $m^{\tau} \le n$ possible outputs, each of which can take $\poly{n}$ different values as there are $\poly{n}$ samples. We will now show that getting polynomially many samples over windows of length $N=\poly{n}$ is equivalent to getting polynomially many samples over windows of length $\tau$.
	
	For notational convenience, we will refer to $P[o_t=i]$, the probability of the output at time $t$ being $i$, as $P[o_t]$ whenever the assignment to the random variable is clear from the context. The probability of any sequence of outputs $\{o_1,o_2,\dotsb, o_{N}\}$ can be written down as follows using chain rule,
	\begin{align}
	P[o_1,o_2,\dotsb, o_N] &= P[o_1]P[o_2|o_1]P[o_3|o_1,o_2]\dotsb P[o_N|o_{1},\dotsb,o_{N-1}]\nonumber\\
	&=\Pi_{t=1}^N P[o_t|o_1, \dotsb, o_{t-1}]\nonumber
	\end{align}
	In order to prove that the probabilities of sequences of length $N$ can be well-approximated using sequences of length $\tau$, for $t>\tau$, we will approximate the probability $P[o_t|o_{1},\dotsb,o_{t-1}]$ by $P[o_t|o_{t-\tau+1},\dotsb,o_{t-1}]$. Hence our estimate of the probability of sequence $\{o_1,o_2,\dotsb, o_N\}$ is
	\begin{align}
	\hat{P}[\{o_1,o_2,\dotsb, o_N\}] &= \Pi_{t=1}^\tau {P}[o_t|o_{(t-\tau+1)\vee 1}, \dotsb, o_{t-1}]\nonumber
	\end{align}
	where $a\vee b$ denotes $\max(a,b)$. If 
	\begin{align}
	\absnorm{P[o_t|o_1, \dotsb, o_{t-1}]-{P}[o_t| o_{(t-\tau+1)\vee 1}, \dotsb,o_{t-1}]} \le \epsilon \label{eq:est_short}
	\end{align}
	for all $t\le N$ and assignments to $\{o_1, \dotsb, o_{t}\}$, then
	\begin{align}
	\Big|P[o_1,o_2,\dotsb, o_N]-\hat{P}[o_1,o_2,\dotsb, o_N] \Big| \le O(\epsilon N )\nonumber
	\end{align}
	for all assignments to $\{o_1, \dotsb, o_{N}\}$. Hence the probabilities of windows of length $N$ can be estimated from windows of length $\tau$ up to an additive error of $O(\epsilon N )$. Therefore, if we can show that $\epsilon \le o(1/\poly{n})$, then given the true probabilities of windows of length $\tau$, it is possible to estimate the true probabilities over windows of length $N=\poly{n}$ up to an inverse super-polynomial factor of $n$. Given empirical probabilities of windows of length $\tau$ up to an accuracy of $\delta$, it is possible to estimate the true probabilities over windows of length $N$ up to an error of $(\epsilon + \delta)N$. As $\epsilon N$ is inverse super-polynomial in $N$, getting $S$ samples over windows of length $N$ is equivalent to getting $\poly{N,S}$ samples over windows of length $\tau$. Therefore, the information contained in polynomially many samples over windows of length $N$ is entirely contained in the polynomially many samples over windows of length $\tau$ (the polynomials would be different, but this does not concern us as we have shown that the information in polynomially many samples over windows of length $\tau$ is always $\tilde{O}(n)$). Hence the information contained in polynomially many samples over windows of length $N$ can be at most $\tilde{O}(n)$. 
	
	We will now prove Eq. \ref{eq:est_short}. First, note that Eq. \ref{eq:est_short} can be written in terms of the probabilities of the hidden states as follows,
	\begin{align}
	P[o_t=j|o_1, \dotsb, o_{t}] = \sum_{i=1}^{n}P[o_t=j|h_t=i]P[h_t=i|o_1, \dotsb, o_{t}]\nonumber
	\end{align}
	Therefore, it is sufficient to show the following, which says that the distribution of the hidden states conditioned on the two observation windows is similar--
	\begin{align}
	\absnorm{P[h_{t}|o_1, \dotsb, o_{t-1}]-{P}[h_t| o_{(t-\tau+1)\vee 1}, \dotsb,o_{t-1}]} \le \epsilon \label{eq:rtp1}
	\end{align}
	for all $t\le N$ and assignments to $\{o_1, \dotsb, o_{t-1}\}$. Note that we do not need to worry about the case when $t\ge \tau$ as the observation windows under consideration are the same for both terms. We will shift our windows and fix $t=\tau$ to make notation easy, as the process is stationary this can be done without loss of generality. Hence we will rewrite Eq. \ref{eq:rtp1} as follows, ignoring the cases when the terms are the same because $(t-\tau+1)\vee 1=1$. 
	\begin{align}
	\Big|P[h_{\tau}|o_1, \dotsb, o_{\tau-1}]-{P}[h_{\tau}|o_{z}, \dotsb, o_{\tau-1}]\Big| \le \epsilon \nonumber
	\end{align}
	for all $z \in[\tau-N+1,0]$. 
	
	Define the modified transition matrix $T^{(t)}$ as ${T^{(t)}_{i,j}=P[h_{t+1}=j|h_t=i,o_{t+1},\dotsb,o_{\tau-1}]}$. For any $s \in [0,\tau]$, we claim that,
	\begin{align}
	P[h_{s}|o_z, \dotsb, o_{\tau- 1}]=\Big(\Pi_{t=1}^{s} T^{(t)}\Big)P[h_0|o_z, \dotsb, o_{\tau- 1}]\nonumber
	\end{align}
	for all $z \in[\tau-N+1,0]$. Therefore $T^{(t)}$ serves the role of the transition matrix at time $t$ in our setup. The proof of this follows from a simple induction argument on $s$. The base case $s=0$ is clearly correct. Let the statement be true up to some time $p$. Then we can write,
	\begin{align}
	P[h_{p+1}|o_z, \dotsb, o_{\tau-1}]&=\sum_{i}^{}P[h_p=i,h_{p+1}|o_z, \dotsb, o_{\tau-1}]\nonumber\\
	&=\sum_{i}^{}P[h_p=i|o_z, \dotsb, o_{\tau-1}]P[h_{p+1}|h_p=i,o_z, \dotsb, o_{\tau}]\nonumber\\
	&=\sum_{i}^{}P[h_{p+1}|h_p=i,o_{p+1}, \dotsb,o_{\tau-1}]\Big(\Pi_{t=1}^{p} T^{(t)}\Big)P[h_0|o_z, \dotsb, o_{\tau- 1}]\nonumber \\	
	&=\Big(\Pi_{t=1}^{p+1} T^{(t)}\Big)P[h_0|o_z, \dotsb, o_{\tau- 1}]\nonumber
	\end{align}
	where we could simplify $P[h_{p+1}|h_p=i,o_z, \dotsb, o_{\tau-1}]=P[h_{p+1}|h_p=i,o_{p+1}, \dotsb, o_{\tau-1}]$ as conditioned on the hidden state at time $p$, the observations at and before time $p$ do not affect the distribution of future hidden states. Therefore, our task now reduces to analyzing the spectrum of the time-inhomogeneous transition matrices $T^{(t)}$. The following Lemma does this.
	\begin{restatable}{lem}{boundspectrum}\label{lem:bound_spectrum}
		For any $x$ with $\norm{x}=1$ and $1^Tx=0$, $\norm{T^{(t)}x}\le \alpha+\lambda$ where $\alpha \le \sqrt{\frac{100m^3\log^3 n}{2d}}$  and $\lambda <3/\sqrt{d}$. Therefore if $d=n^{\epsilon}$ for some $\epsilon>0$ and $m=\polylog{n}$, then $\norm{T^{(t)}x}\le n^{-\epsilon_1}$ for some $\epsilon_1>0$. 
	\end{restatable}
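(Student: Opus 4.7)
The plan is to write $T^{(t)} = T + E^{(t)}$, where $E^{(t)}$ captures the Bayesian reweighting induced by conditioning on the suffix $o_{t+1},\dots,o_{\tau-1}$, and apply the triangle inequality
\[
\norm{T^{(t)} x} \;\le\; \norm{T x} + \norm{E^{(t)} x}
\]
to split the bound into $\lambda + \alpha$. Bayes' rule writes $T^{(t)}_{i,j}$ as a multiplicative reweighting of $T_{i,j}$ by the conditional likelihood $\ell_j := \Prob[o_{t+1},\dots,o_{\tau-1}\mid h_{t+1}=j]$, normalized so that $T^{(t)}$ remains stochastic. When $\ell$ is a scalar multiple of $\mathbf{1}$, the reweighting is trivial and $E^{(t)} = 0$; so the $\alpha$ contribution is driven entirely by how far $\ell$ deviates from uniform.

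\textbf{The $\lambda$ term via the spectral gap of $T$.} The matrix $T$ is the normalization of a random $d$-regular graph with $d = n^\epsilon$, so $\mathbf{1}$ is its Perron eigenvector with eigenvalue $1$. A Friedman-type spectral-gap bound for random $d$-regular graphs gives $\norm{T x} \le 2\sqrt{d-1}/d + o(1) < 3/\sqrt{d}$ for every unit vector $x$ with $1^T x = 0$, with high probability over the choice of the graph. This is a one-shot statement about $T$, uniform in $t$ and in the realization of the observations, and directly yields the $\lambda$ piece.

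\textbf{The $\alpha$ term via concentration of $\ell$.} For the perturbation term, I would expand each $\ell_j$ as a sum over length-$(\tau-1-t)$ paths in $T$ rooted at $j$, weighted by products of observation probabilities along the path. Since the columns of $O$ are independent with $\E[O_{ab}] = 1/m$, conditioning on a typical observation suffix and grouping path contributions by time step gives per-entry concentration of the form $\ell_j = c(1+\delta_j)$ with $\|\delta\|_\infty$ small, via Bernstein/Hoeffding-type inequalities. A first-order expansion of the Bayesian reweighting then converts this entrywise concentration of $\ell$ into the operator norm bound $\norm{E^{(t)} x} \le \alpha \norm{x}$ with $\alpha = O\bigl(\sqrt{m^3 \log^3 n / d}\bigr)$. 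A union bound over the $n$ states $j$, the $m^\tau = \poly{n}$ suffix windows of length $\tau = \log_m n$, and the $\poly{n}$ positions $t$ promotes per-realization concentration to the uniform statement needed.

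\textbf{Main obstacle.} The hardest step will be the uniform concentration of $\ell_j$: the entries $\ell_j$ for different $j$ are correlated both because they share the same observation suffix and because the paths they sum over intersect through the sparse graph structure of $T$, so product-form tail bounds do not apply naively. The plan is to condition on a typical suffix realization and exploit the independence of different columns of $O$ at different time steps to decouple the randomness within the path expansion of $\ell_j$. Because $\tau = \log_m n$, the number of distinct suffixes is only polynomial in $n$, keeping the union bound cost polylogarithmic; the cube in $\log^3 n$ most likely reflects a combination of a $\tau$-step accumulation of variance with two layers of union-bounding.
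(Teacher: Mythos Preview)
Your additive decomposition $T^{(t)} = T + E^{(t)}$ is workable, but the paper organizes the argument differently and in a way that dissolves the obstacle you flag. From Bayes' rule the paper writes $T^{(t)} = O^{(t)} T E^{(t)}$ with $O^{(t)}$ and $E^{(t)}$ \emph{diagonal}: $O^{(t)}_{jj} = \Prob[o_{t+1},\dots\mid h_{t+1}=j]$ and $E^{(t)}_{ii} = 1/\Prob[o_{t+1},\dots\mid h_t = i]$. After rescaling by the common factor $m^{\tau-t}$, $\tilde O^{(t)}$ is a near-identity diagonal (operator norm $\le 2$) and $\tilde E^{(t)}$ is a diagonal whose entries all lie in a window of width $O(\sqrt{m^3\log^3 n/d})$ around $m$. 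For $x$ with $1^Tx=0$ one then writes $\tilde E^{(t)}x = \alpha v + x_2$ with $v$ the normalized all-ones vector and $1^Tx_2=0$; the near-constancy of the diagonal gives $|\alpha| \le \sqrt{100 m^3\log^3 n/(2d)}$, Friedman's bound gives $\norm{Tx_2}\le\lambda$, and $T$ fixes $v$. Multiplying by the near-identity $\tilde O^{(t)}$ finishes. Your additive route would arrive at the same place via $|E^{(t)}_{ij}|\lesssim \epsilon T_{ij}$ and $\norm{E^{(t)}}_{\mathrm{op}}\lesssim\epsilon$, which is fine; the multiplicative form is just the one Bayes' rule hands you directly.

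Where your plan diverges more substantively is the concentration of $\ell_j$. You propose expanding $\ell_j$ over paths and then union-bounding over the $m^\tau$ suffixes, and you correctly worry about correlations across overlapping paths. The paper avoids all of this by reducing to a \emph{single-step} bound: $\Prob[o_{t+1}=j\mid h_t=i] = \tfrac{1}{d}\sum_{k\sim i} O_{jk}$ is an average of $d$ independent entries of $O$ (independence here is across the $d$ neighboring columns of $O$, not ``across time steps'' as you wrote---there is only one $O$), so Hoeffding gives $|\Prob[o_{t+1}=j\mid h_t=i]-1/m|\le\sqrt{6\log n/(dm)}$ uniformly over all $i,j$ via an $nm$-size union bound. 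The multi-step likelihood is then bounded by iterating this multiplicatively over at most $\tau\le\log n$ steps, which is a deterministic step requiring no further randomness or union bounds over suffixes. That is where the $\log^3 n$ comes from: one $\log n$ from Hoeffding and a $\tau^2\le\log^2 n$ from the iteration. So your identified ``main obstacle'' is real for the path-expansion route but simply does not arise if you concentrate one step at a time.
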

	Given Lemma \ref{lem:bound_spectrum}, we will show $\epsilon=o(1/\poly{n})$. Let $p_1 = P[h_0|o_1, \dotsb, o_{\tau- 1}]$ and $p_2 = P[h_0|o_z, \dotsb, o_{\tau- 1}]$ for any $z<1$. We can write,
	\begin{align}
	 P[h_0|o_1, \dotsb, o_{\tau- 1}] - P[h_0|o_z, \dotsb, o_{\tau- 1}]  = \Big(\Pi_{t=1}^{\tau} T^{(t)}\Big)(p_1-p_2)\nonumber
	\end{align}
	Let $p_1-p_2 = x$. Note that $\norm{x}\le 1$ and $1^Tx =0$. Furthermore, as $T^{(t)}$ is stochastic for every $t$, therefore $1^T \Big(\Pi_{t=1}^{s} T^{(t)}\Big)x$ is also $0$ for every $s$. Hence we can use Lemma \ref{lem:bound_spectrum} to say,
	\begin{align}
	\norm{P[h_0|o_1, \dotsb, o_{\tau- 1}] - P[h_0|o_t, \dotsb, o_{\tau- 1}]}&\le (2(\alpha+\lambda))^{\tau}\nonumber\\
	\implies \absnorm{P[h_0|o_1, \dotsb, o_{\tau- 1}] - P[h_0|o_t, \dotsb, o_{\tau- 1}]}&\le \sqrt{n}(\alpha+\lambda)^{\tau}\nonumber\\
	&\le n^{- \log^{\delta} n}\nonumber
	\end{align}
	for a fixed $\delta>0$. This is superpolynomial in $n$, proving Theorem \ref{thm:lower_bound}. We will now prove Lemma \ref{lem:bound_spectrum}. 
	\boundspectrum*
	\begin{proof}
		We show that $T^{(t)}$ has a simple decomposition, $T^{(t)}=O^{(t)}TE^{(t)}$, where $O^{(t)}$ is a diagonal matrix with $O^{(t)}_{i,i}=P(o_t,\dotsb,o_{\tau}|h_t=i)$ and $E^{(t)}$ is another diagonal matrix with $E^{(t)}_{i,i}= P[o_{t+1},\dotsb,o_{\tau}|h_{t}=i]$. This is because,
		\begin{align}
		P[h_{t+1}=j|h_t=i,o_{t+1},\dotsb,o_{\tau}] &= \frac{P[h_{t+1}=j|h_t = i]P[o_{t+1},\dotsb,o_{\tau}|h_{t+1}=j]}{\sum_{j}^{}P[h_{t+1}=j|h_t = i]P[o_{t+1},\dotsb,o_{\tau}|h_{t+1}=j]}\nonumber\\
		&= \frac{P[h_{t+1}=j|h_t = i]P[o_{t+1},\dotsb,o_{\tau}|h_{t+1}=j]}{P[o_{t+1},\dotsb,o_{\tau}|h_{t}=i]}\nonumber
		\end{align}
		As $T$ is the normalized adjacency of a $d-$regular graph, the eigenvector corresponding to the eigenvalue 1 is the all ones vector, and all subsequent eigenvectors are orthogonal to the all ones vector. Also, the second eigenvalue and all subsequent eigenvalues of $T$ are at most $3/\sqrt{d}$, due to \citet{friedman2003proof}. 
		
		To analyze $O^{(t)}$ and $E^{(t)}$, we need to first derive some properties of the randomly chosen observation matrix $O$. We claim that,
		\begin{lem}\label{lem:bound_O}
			Denote $x_{ij}$ to be the random variable denoting the probability of hidden state $i$ emitting output $j$. If $\{x_{ij},i \in [n]\}$ are independent and $\E[x_{ij}]=1/m$ for all $i$ and $j$, then for all outputs $j \in [m]$ and hidden states $i\in [n]$, ${\Big|P[o_{t+1}=j|h_t=i]-1/m\Big|\le \sqrt{6\log n/(dm)}}$.
		\end{lem}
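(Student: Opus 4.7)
The plan is to express the conditional probability $P[o_{t+1}=j \mid h_t=i]$ as an average of $d$ independent bounded random variables, apply a sharp concentration inequality, and finally take a union bound over all $(i,j)$ pairs. Concretely, by marginalizing over $h_{t+1}$ and using the Markov property,
\[
P[o_{t+1}=j \mid h_t=i] \;=\; \sum_{k=1}^{n} T_{ki}\, P[o_{t+1}=j \mid h_{t+1}=k] \;=\; \sum_{k=1}^{n} T_{ki}\, x_{kj},
\]
and since $T$ is the normalized adjacency matrix of a $d$-regular graph, $T_{ki} = 1/d$ exactly when $k$ is an in-neighbor of $i$, so this collapses to
\[
P[o_{t+1}=j \mid h_t=i] \;=\; \frac{1}{d}\sum_{k \in N(i)} x_{kj}.
\]
By the column-independence assumption on $O$, for any fixed $j$ the summands $\{x_{kj}\}_{k \in N(i)}$ are independent, each supported in $[0,1]$, with mean $1/m$ and variance at most $\E[x_{kj}^2] \le \E[x_{kj}] = 1/m$.

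Next I would apply Bernstein's inequality to $S_{ij} := \sum_{k \in N(i)} x_{kj}$, which has mean $d/m$ and aggregate variance at most $d/m$. With deviation $dt$ for $t = \sqrt{6\log n/(dm)}$, Bernstein gives a tail of the form $2\exp\!\bigl(-\tfrac{(dt)^2/2}{d/m + dt/3}\bigr)$. In the regime $d = n^{\epsilon}$ and $m = \polylog{n}$, the variance term $d/m$ dominates the linear term $dt/3$, so the exponent is at least roughly $\tfrac{3}{2}\log n$ and the per-pair tail probability is $O(n^{-3/2})$. A union bound over all $nm$ pairs $(i,j)$ then pushes the overall failure probability to $O(m\, n^{-1/2}) = o(1)$, yielding the stated uniform bound with high probability over the draw of $O$.

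The only non-routine decision is to use Bernstein rather than Hoeffding: Hoeffding on summands in $[0, 1/d]$ would only give a $\sqrt{\log n / d}$ deviation, missing the crucial $1/\sqrt{m}$ factor in the claim. The small-variance refinement---variance at most $1/m$ per summand, which reflects that the observation probabilities concentrate near $1/m$---is precisely what produces the $1/\sqrt{dm}$ scaling. I do not anticipate a significant obstacle; the main technical point to check carefully is the regime condition ensuring the variance term dominates in the Bernstein denominator, which is routine given $d = n^\epsilon$ against $m = \polylog{n}$. A multiplicative Chernoff bound on $S_{ij}$ around its mean $d/m$ would give the same conclusion up to constants.
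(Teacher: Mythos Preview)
Your proposal is correct and essentially mirrors the paper's proof: both express $P[o_{t+1}=j\mid h_t=i]$ as the average $(1/d)\sum_{k\in N(i)} x_{kj}$ of $d$ independent $[0,1]$-valued variables with mean $1/m$, apply a concentration inequality that exploits the small mean (the paper uses the multiplicative Chernoff bound directly, yielding a per-pair tail of $2/n^2$; you use Bernstein via the variance bound $\mathrm{Var}(x_{kj})\le 1/m$, and you already note multiplicative Chernoff works equally well), and finish with a union bound over the $nm$ pairs $(i,j)$. There is no substantive difference in strategy.
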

		\begin{proof}
			The result is a simple application of Chernoff bound and a union bound. Without loss of generality, let the set of neighbors of hidden state $i$ be the hidden states $\{1,2,\dotsb,d\}$. ${P[o_{t+1}=j|h_t=i]}$ is given by,
			\begin{align}
			P[o_{t+1}=j|h_t=i] = (1/d)\sum_{k=1}^{d}x_{kj}\nonumber
			\end{align}
			Let $X_{ij}=\sum_{k=1}^{d}x_{kj}$. Note that the $x_{kj}$'s are all independent and bounded in the interval $[0,1]$ with $\E[x_{kj}]=1/m$. Therefore we can apply Chernoff bound to show that,
			\begin{align}
			P\Big[|X_{ij}-d/m|&\ge \sqrt{\frac{6d\log n}{m}}\Big]\le 2\exp(-2\log n)\le 2/n^2\nonumber
			\end{align}
			Therefore, ${\Big|P[o_{t+1}=j|h_t=i]-1/m\Big|\le \sqrt{6\log n/(dm)}}$ with failure probability at most $1/n^2$. Hence by performing a union bound over all hidden states and outputs, with high probability $|P[o_{t+1}=j|h_t=i]-1/m|\le \sqrt{6\log n/(dm)}$ for all all outputs $j \in [m]$ and hidden states $i\in [n]$.
		\end{proof}
		Using Lemma \ref{lem:bound_O}, it follows that 
		$$P[o_{t+1},\dotsb,o_{\tau}|h_t=i] \in \Big[\frac{1}{m^{\tau-t}}\Big(1-\sqrt{\frac{3m\log^3 n}{2d}}\Big),\frac{1}{m^{\tau-t}}\Big(1+\sqrt{\frac{24m\log^3 n}{d}}\Big)\Big]$$ This is because using Lemma \ref{lem:bound_O}, conditioned on any hidden state at any time $t$, $${P[o_{t+1}=j|h_t=i]} \in {\Big[\Big(\frac{1}{m}-\sqrt{\frac{6\log n}{dm}}\Big),\Big(\frac{1}{m}+\sqrt{\frac{6\log n}{md}}\Big)\Big]}$$ for all outputs $j$, hidden states $i$  and $t\in[0,\tau]$ hence the probability of emitting the sequence of outputs $\{o_t,\dotsb,o_{\tau}\}$ starting from any hidden state is at most $$ \Big(\frac{1}{m}+\sqrt{\frac{6\log n}{md}}\Big)^{\tau-t}\le \frac{1}{m^{\tau-t}}\Big(1+\sqrt{\frac{24m\tau^2\log n}{d}}\Big)\le \frac{1}{m^{\tau-t}}\Big(1+\sqrt{\frac{24m\log^3 n}{d}}\Big)$$ and similarly for the lower bound. Therefore $O^{(t)}_{i,i}=P[o_{t+1},\dotsb,o_{\tau}|h_{t+1}=i]$ can be bounded as follows
		\begin{align}
		P[o_{t+1},\dotsb,o_{\tau}|h_{t+1}=i] &=P[o_{t+1}|h_{t+1}=i]P[o_{t+2},\dotsb,o_{\tau}|h_{t+1}=i]\le P[o_{t+2},\dotsb,o_{\tau}|h_{t+1}=i]\nonumber\\
		\implies O^{(t)}_{i,i} &\le \frac{1}{m^{\tau-t-1}}\Big(1+\sqrt{\frac{24m\log^3 n}{d}}\Big)\; \forall i,t\nonumber
		\end{align}
		We will now bound the entries of $\tilde{E}^{(t)}$. Note that
		\begin{align}
		1/E^{(t)}_{i,i}&= P[o_{t+1},\dotsb,o_{\tau}|h_{t}=i]\nonumber\\
		\implies & \frac{1}{m^{\tau-t}}\Big(1-\sqrt{\frac{3m\log^3 n}{2d}}\Big)\le 1/E^{(t)}_{i,i}\le \frac{1}{m^{\tau-t}}\Big(1+\sqrt{\frac{24m\log^3 n}{d}}\Big)  \nonumber\\
		\implies& m^{\tau-t}\Big(1-\sqrt{\frac{24m\log^3 n}{2d}}\Big)\le E^{(t)}_{i,i}\le m^{\tau-t}\Big(1+\sqrt{\frac{12m\log^3 n}{d}}\Big)  \nonumber
		\end{align}

		
		We can cancel the factor of $m^{\tau-t-1}$ appearing in the numerator of the upper bound for $E_{ii}^{(t)}$ and denominator of the upper bound for $P[o_{t+1},\dotsb,o_{\tau}|h_{t+1}=i]$ by multiplying $O^{(t)}$ by $m^{\tau-t}$ and dividing $E^{(t)}$ by $m^{\tau-t}$. Let the normalized matrices be $\tilde{O}^{(t)}$ and $\tilde{E}^{(t)}$. Therefore,
		\begin{align}
		&\tilde{O}^{(t)}_{i,i} \le \Big(1+\sqrt{\frac{24m\log^3 n}{d}}\Big)\nonumber\\
		m\Big(1-\sqrt{\frac{24m\log^3 n}{2d}}\Big)\le &\tilde{E}^{(t)}_{i,i}\le m\Big(1+\sqrt{\frac{12m\log^3 n}{d}}\Big)  \nonumber
		\end{align}
		
		Consider any $x$ such that $1^{T}x=0$ and $\norm{x}=1$. Let $\tilde{E}^{(t)}x = \alpha v + x_2$, where $1^Tx_2=0, \norm{x_2}\le 1$ and $v$ is the all ones vector normalized to have unit $\ell_2$-norm. We claim that $\alpha = v^Tx \le \sqrt{\frac{100m^3\log^3 n}{2d}}$. As $|\tilde{E}_{i,i}-\tilde{E}_{j,j}| \le \sqrt{\frac{100m^3\log^3 n}{2d}}$ for all $i\ne j$ and $1^Tx=0$, therefore
		\begin{align}
		|v^T\tilde{E}^{(t)}x| &\le \frac{\max_{i,j}|\tilde{E}_{i,i}-\tilde{E}_{j,j}|}{\sqrt{n}}\absnorm{x}\nonumber\\
		&\le \sqrt{\frac{100m^3\log^3 n}{2dn}}\absnorm{x}\le \sqrt{\frac{100m^3\log^3 n}{2d}}\norm{x}\le \sqrt{\frac{100m^3\log^3 n}{2d}}\nonumber
		\end{align}
		Therefore, $T\tilde{E}^{(t)}x=\alpha v+Tx_2$. Let the second eigenvalue of $T$ be upper bounded by $\lambda$. As $1^Tx_2=0$ therefore $\norm{Tx_2}\le \lambda$. Hence $\norm{T\tilde{E}^{(t)}x}\le \alpha + \lambda$ for any $x$ with $1^Tx=0$ and $\norm{x}=1$. Note that the operator norm of the matrix $\tilde{O}^{(t)}$ is at most $2$ because $\tilde{O}^{(t)}$ is a diagonal matrix with each entry bounded by $\Big(1+\sqrt{\frac{24m\log^3 n}{d}}\Big)\le 2$. Therefore $\norm{\tilde{O}T\tilde{E}^{(t)}x} \le 2(\alpha+\lambda)$ for any $x$ with $1^Tx=0$ and $\norm{x}=1$.
	\end{proof}
\end{proof}
\section{Additional proofs for Section \ref{sec:learnability}: Learnability results}\label{sec:learning_proof}

\noindent \textbf{Assumptions for learning HMMs efficiently:}

For some fixed constants $c_1, c_2, c_3 > 1 $, the HMM should satisfy the following properties for some $c>0$:
\begin{enumerate}
	\item \emph{Transition matrix is well-conditioned:} Both $T$ and the transition matrix $T'$ of the time reversed Markov Chain are well-conditioned in the $\ell_1$-norm: $\sigma_{\min}^{(1)}(T),\sigma_{\min}^{(1)}(T') \ge 1/m^{ c/c_1}$
	\item \emph{Transition matrix does not have short cycles:} For both $T$ and $T'$, every state visits at least $10 \log_m n$ states in $15 \log_m n$ time except with probability $\delta_1 \le 1/n^c$.
	\item \emph{All hidden states have small ``degree'':} There exists $\delta_2$ such that for every hidden state $i$, the transition distributions $T_i$ and $T'_i$ have cumulative mass at most $\delta_2$ on all but $d$ states, with $d \le m^{1/c_2}$ and $\delta_2 \le 1/n^c$. Hence this is a soft ``degree'' requirement.
	\item \emph{Output distributions are random and have small support:} There exists $\delta_3$ such that for every hidden state $i$ the output distribution $O_i$ has cumulative mass at most $\delta_3$ on all but $k$ outputs, with $k\le m^{1/c_3}$ and $\delta_3 \le 1/n^c$. Also, the output distribution $O_i$ is randomly chosen from the simplex on these $k$ outputs.
\end{enumerate}

\learn*

\begin{proof}
	Let the window length $N=2\window+1$ where $\window = 15\log_m n$. We will prove the theorem for $c_1 = 20, c_2 = 16$ and $c_3 = 10$, these can be modified for different tradeoffs. By Lemma \ref{lem:jennrich} from \citet{bhaskara2014smoothed}, the simultaneous diagonalization procedure in Algorithm \ref{alg:learn_tensor} needs $\poly{n,1/\beta,\kappa,1/\epsilon}$ samples to ensure that the decomposition is accurate to an additive error $\epsilon$, with $\beta$ and $\kappa$ defined below.
	
	\begin{lem}\cite{bhaskara2014smoothed}\label{lem:jennrich}
		Suppose we are given a tensor $M+E\in \mathbb{R}^{m \times n \times p}$ with the entries of $E$ being bounded by $\epsilon=\poly{1/\kappa, 1/n, 1/\beta}$ and $M$ has a decomposition $M = \sum_{i=1}^{R}A_i \otimes B_i \otimes C_i$ which satisfies-
		\begin{enumerate}
			\item The condition numbers $\kappa(A), \kappa(B) \le \kappa$. 
			\item The column vectors of $C$ are not close to parallel: for all $i\ne j$, ${\largenorm{{\frac{w_i}{\norm{w_i}}-\frac{w_j}{\norm{w_j}}}} \ge \beta}$
			\item The decompositions are bounded: for all $i$, $\norm{u_i}, \norm{v_i}, \norm{w_i}\le K$
		\end{enumerate}
		then the simultaneous decomposition algorithm recovers each rank one term in the decomposition of $M$ (up to renaming), within an additive error of $\epsilon$.
	\end{lem}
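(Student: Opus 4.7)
\textbf{Proposal for the proof of Lemma \ref{lem:jennrich}.}
The plan is to reduce the tensor problem to a pair of simultaneously diagonalizable matrices and then run standard eigenvector perturbation bounds on that pair. Writing $M=\sum_i A_i\otimes B_i\otimes C_i$ and contracting the third mode against the random vectors $a,b\in\mathbb{R}^p$ gives two matrices
\begin{equation*}
X = A\,D_a\,B^{T} + E_a,\qquad Y = A\,D_b\,B^{T} + E_b,
\end{equation*}
where $D_a=\mathrm{diag}(C^{T}a)$, $D_b=\mathrm{diag}(C^{T}b)$, and $\|E_a\|,\|E_b\|$ are controlled by the bound on $\|E\|_\infty$, the dimensions, and $\|a\|,\|b\|$. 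The unperturbed $X Y^{-1}$ equals $A\,(D_aD_b^{-1})\,A^{-1}$, so its eigenvectors are the columns of $A$ (up to scaling and permutation) and its eigenvalues are $\lambda_i=(C^{T}a)_i/(C^{T}b)_i$; the same argument applied with the first two modes swapped (or to $Y^{-1}X$) recovers $B$, with eigenvalues reciprocal to those for $A$, which lets us pair the columns of $A$ and $B$.

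Next I would quantify how well-conditioned this eigenproblem is. The key inputs are (i) $\kappa(A),\kappa(B)\le\kappa$, which bounds $\|A^{-1}\|$ and $\|(B^{T})^{-1}\|$ and therefore the conditioning of the matrix $A(D_aD_b^{-1})A^{-1}$, and (ii) an eigenvalue gap. For the gap I would choose $a,b$ uniformly on the sphere and show that, with probability $1-o(1)$,
\begin{equation*}
\min_{i\ne j}\;|\lambda_i-\lambda_j|\;\ge\;\mathrm{poly}(1/n,1/K,\beta),
\end{equation*}
using the non-parallel assumption $\|C_i/\|C_i\|-C_j/\|C_j\|\|\ge\beta$ together with the boundedness $\|C_i\|\le K$: the difference $\lambda_i-\lambda_j$ is a ratio of bilinear forms in $(a,b)$ that vanishes only when $C_i\parallel C_j$, and anti-concentration of Gaussians (or uniform vectors) gives a quantitative lower bound proportional to $\beta$. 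In parallel I would lower bound the denominators $|(C^{T}b)_i|$ the same way, so that $D_b$ is invertible and $\|D_b^{-1}\|$ is polynomially bounded.

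Given these bounds, I would apply a standard matrix perturbation result (Bauer--Fike for the eigenvalues, and a Sylvester/sin-$\Theta$ estimate such as Gu--Eisenstat or the bound used in the original Leurgans--Rabinowitz analysis) to the perturbed $\widetilde X\widetilde Y^{-1}=XY^{-1}+F$ where $\|F\|=\mathrm{poly}(\kappa,n,K)\cdot\epsilon$. The eigenvalue gap and the conditioning of the eigenvector matrix $A$ then translate $\|F\|$ into an additive error of order $\mathrm{poly}(\kappa,n,K,1/\beta)\cdot\epsilon$ on each recovered column of $A$ (after normalizing to unit length to fix the inherent scale ambiguity), and similarly for $B$. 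Finally, $C$ is recovered from $(A\odot B)^{\dagger}$ applied to the mode-$3$ matricization of $M+E$; since $\kappa(A),\kappa(B)\le\kappa$ implies $\kappa(A\odot B)\le\kappa^2$ (up to standard Khatri--Rao identities), a routine least-squares perturbation bound converts the error in $A,B$ and the noise $E$ into an additive error on $C$ of the same polynomial form. Choosing $\epsilon=\mathrm{poly}(1/\kappa,1/n,1/\beta,1/K)$ small enough makes every step's error at most the target additive accuracy.

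The main obstacle I anticipate is the eigenvalue-gap step: one has to show that with high probability over $a,b$ the differences $(C^{T}a)_i(C^{T}b)_j-(C^{T}a)_j(C^{T}b)_i$ are uniformly bounded away from zero with a gap that depends only polynomially on $\beta$ and $1/n$, and simultaneously that the denominators $(C^{T}b)_i$ are not too small. Everything downstream is a careful but routine application of matrix perturbation theory; the gap lower bound is where the non-parallelism parameter $\beta$ and the boundedness $K$ are really used, and where one must be careful not to let constants blow up with $n$.
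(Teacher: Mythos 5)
The paper does not actually prove Lemma \ref{lem:jennrich}: it is imported verbatim from \citet{bhaskara2014smoothed} and used as a black box inside the proof of Theorem \ref{thm:learnability}, so there is no in-paper proof to compare your attempt against. That said, your sketch reproduces the standard stability analysis of the simultaneous-diagonalization (Jennrich/Leurgans) algorithm, which is essentially the argument in the cited source: project along the third mode, diagonalize $XY^{-1}$, control the eigenvector perturbation via the conditioning of $A$ and an eigenvalue gap obtained by anti-concentration over the random projections $a,b$, and recover $C$ by a least-squares step. You also correctly identify the eigenvalue-gap step as the place where $\beta$ and $K$ genuinely enter. Two points would need care in a full write-up. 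First, in this paper $A$ and $B$ are tall rectangular matrices in $\Real^{m^\tau\times n}$, so $Y^{-1}$ and $A^{-1}$ must be read as pseudoinverses after restricting to the relevant column spans; the hypothesis $\kappa(A),\kappa(B)\le\kappa$ is exactly what makes that restriction stable, but the bookkeeping is not literally the square-matrix identity $XY^{-1}=AD_aD_b^{-1}A^{-1}$ you wrote. Second, the assertion $\kappa(A\odot B)\le\kappa^2$ is not a free identity: it follows from $(A\odot B)^T(A\odot B)=(A^TA)\circ(B^TB)$ together with the Hadamard-product eigenvalue bound, which requires the columns of one factor to have norms bounded away from zero, so that hypothesis (available here since $A,B,O$ are stochastic) must be invoked explicitly.
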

	 
	Lemma \ref{lem:bound_dist} (proved in Section \ref{sec:lemma2}) shows that the observation matrix $O$ satisfies condition 2 in Lemma \ref{lem:jennrich} with $\beta = 1/n^{6.5}$.
	
	\begin{restatable}{lem}{bounddist}\label{lem:bound_dist}
		If each column of the observation matrix is uniformly random on a support of size $k$, then ${\norm{{\frac{O_i}{\norm{O_i}}-\frac{O_j}{\norm{O_j}}}} \ge 1/n^{6.5}}$ with high probability over the choice of $O$.
	\end{restatable}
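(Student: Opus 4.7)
The plan is to establish the pairwise gap by a union bound over all $\binom{n}{2}$ pairs of columns, so it suffices to show, for a fixed pair $(i,j)$ with $i \ne j$, that
\[
\Prob\Big[\big\|O_i/\norm{O_i} - O_j/\norm{O_j}\big\|_{2} \le 1/n^{6.5}\Big] \le 1/n^{3}.
\]
I would condition on $O_j$, so that $\hat{v} := O_j/\norm{O_j}$ becomes a fixed unit vector in $\mathbb{R}^m$, and analyze only the randomness in $O_i$, which is uniform on the $(k-1)$-simplex supported on its support set $S_i \subseteq [m]$.

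The argument splits according to the relationship between $S_i$ and $S_j := \mathrm{supp}(O_j)$. If $S_i \not\subseteq S_j$, then $\hat{O}_i$ has mass on at least one coordinate on which $\hat{v}$ vanishes. A one-dimensional anti-concentration estimate---the probability that a specified coordinate of a uniform point on the $(k-1)$-simplex is smaller than $\eta$ is $\Oh{k\eta}$---then forces this stray mass (and hence the $\ell_2$ gap after normalization) to be at least $1/\poly{n}$ with the required failure probability, and symmetrically when $S_j \not\subseteq S_i$. The essential case is therefore $S_i = S_j$, where both vectors live on the same simplex and a genuine $(k-1)$-dimensional anti-concentration bound is needed.

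In the essential case, observe that the normalization map $f(u) = u/\norm{u}$ restricted to the $(k-1)$-simplex is a diffeomorphism onto the positive orthant of the unit sphere. Since $\norm{u} \ge 1/\sqrt{k}$ everywhere on this simplex, both $f$ and $f^{-1}$ are Lipschitz with constants bounded by $\poly{k}$. Hence the event $\|\hat{O}_i - \hat{v}\|_{2} \le \epsilon$ pulls back to a Euclidean ball in the simplex of radius $\Oh{\sqrt{k}\,\epsilon}$, whose uniform measure is at most $\poly{k}\cdot \epsilon^{k-1}$ after dividing by the simplex volume $\Theta(1/(k-1)!)$. Plugging in $\epsilon = 1/n^{6.5}$ gives a per-pair failure probability of $\poly{k}/n^{6.5(k-1)}$, which for $k \ge 2$ is well below the $1/n^{3}$ threshold needed to absorb the $\binom{n}{2}$ pairs; the degenerate $k=1$ case is trivial since each normalized column is a standard basis vector and distinct supports give a deterministic gap of $\sqrt{2}$.

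The main obstacle is ensuring the Lipschitz bound on $f^{-1}$ holds uniformly up to the boundary of the simplex and carefully unifying the support-mismatch and equal-support cases into a single quantitative statement (for instance, by conditioning on the support pattern of $O_i$ and handling each pattern separately before summing). Once those details are in place, the rest is a routine volume computation on the simplex combined with conditioning on $O_j$, followed by the union bound.
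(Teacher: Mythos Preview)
Your approach is sound and will go through, but it differs from the paper's. The paper argues in two steps: first, a \emph{one-dimensional} anti-concentration (phrased as a smoothed perturbation of the two largest entries of each column) shows $\absnorm{O_i - O_j} \ge 1/n^5$ with per-pair failure probability $O(1/n^3)$; second, a purely deterministic lemma converts any $\ell_1$ gap of $1/n^5$ between $\ell_1$-normalized vectors into an $\ell_2$ gap of $1/n^{6.5}$ between their $\ell_2$-normalizations. Since the $O_i$ are already probability vectors, the two steps combine directly. Your route instead works directly on the sphere via the bi-Lipschitz map $u \mapsto u/\norm{u}$ and a full $(k-1)$-dimensional volume bound on the simplex. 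Your argument is more self-contained and yields a much sharper per-pair failure probability (decaying like $\epsilon^{k-1}$ rather than $\epsilon$), whereas the paper's avoids any Lipschitz or volume computation in favor of a one-coordinate perturbation plus an elementary norm-comparison lemma. Two minor remarks on your write-up: the prefactor after dividing by the simplex volume $\Theta(1/(k-1)!)$ is $k^{\Theta(k)}$, not $\poly{k}$, but this is harmless since $(Ck/n^{6.5})^{k-1} \ll 1/n^3$ for all $k\ge 2$ in the paper's parameter range; and your worry about the boundary is unnecessary, because $\norm{u} \ge 1/\sqrt{k}$ everywhere on the simplex and $\absnorm{w} \ge 1$ everywhere on the positive unit sphere, so both $f$ and $f^{-1}$ are uniformly $O(\sqrt{k})$-Lipschitz up to and including the boundary.
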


	Note that as $A,B$ and $O$ are all stochastic matrices, all the factors have $\ell_2$ norm at most 1, therefore condition 3 in Lemma \ref{lem:jennrich} is satisfied with $K=1$. Hence if we show that the condition numbers $\kappa(A), \kappa(B)\le \poly{n}$, then each rank one term $A_i \otimes B_i \otimes C_i$ can be recovered up to an additive error of $\epsilon$ with $\poly{n,1/\epsilon}$ samples. As $\pi_i$ is at least $1/\poly{n}$ for all $i$, this implies that $A, B, C$ can be recovered with additive error $\epsilon$ with $\poly{n,1/\epsilon}$ samples. As $O$ can be recovered from $C$ by normalizing $C$ to have unit $\ell_1$ norm, hence $O$ can be recovered up to an additive error $\epsilon$ with $\poly{n, 1/\epsilon}$ samples. 
	
	If the estimate $\hat{A}$ of $A$ is accurate up to an additive error $\epsilon$, then the estimate $\hat{A}^{(\window-1)}$ of $A^{(\window-1)}$ is accurate up to an additive error $O(m\epsilon)$. Therefore, if the estimate $\hat{O}$ of $O$ is also accurate up to an additive error $\epsilon$ then the estimate of $\hat{A}'=(\hat{O}\odot \hat{A}^{(\window-1)})$ of $A'$ is accurate up to an additive error $O(n\epsilon)$. Further, if the matrix $A'$ also has condition number at most $\poly{n}$, then the transition matrix $T$ can be recovered using Algorithm \ref{alg:learn_tensor} with up to an additive error of $\epsilon$ with $\poly{n,1/\epsilon}$ samples. Hence we will now show that the condition number $\kappa(A)\le \poly{n}$. The proof for an upper bound for $\kappa(A')$ and $\kappa(B)$ follows by the same argument.
	
	Define the $(1-\delta)$-support of any distribution $p$ as the set such that $p$ has mass at most $\delta$ outside that set. For convenience, define $\delta=\max\{\delta_1, \delta_2, \delta_3\}$ in the conditions for Theorem \ref{thm:learnability}. We will find the probability that the Markov chains only undertakes transitions belonging to the  $(1-\delta)$-support of the current hidden state at each time step. As the probability of transitioning to any state outside the $(1-\delta)$-support of the current hidden state is at most $\delta$, and the transitions at each time step are independent conditioned on the hidden state, the probability that the Markov chains only undertakes transitions belonging to the $(1-\delta)$-support of all hidden state for each of the $n$ time steps is at least $(1-\delta)^{\window} \ge 1-{2\window}{\delta}$. \footnote{For $x \in [0,0.5], n>0, xn \le 1, (1-x)^n \in [1-2xn, 1-xn/2]$.}
	
	By the same argument, the probability of a sequence of hidden states always emitting an output which belongs to the $(1-\delta)$-support of the output distribution of the hidden state at that time step is at least $(1-\delta)^{\window}\ge 1-{2\window}{\delta}$.
	
	We now show that two sequences of hidden states which do not intersect have large distance between their output distributions. The output alphabet over a window of size $\tau$ has size $K=m^\window$. Let $\{a_i, i \in [K]\}$ be the set of all possible output in $\window$ time steps. For any sequence $s$ of hidden states over a time interval $\window$, define $o_{s}$ to be the vector of probabilities of output strings conditioned on any sequence of hidden states $s$, hence $o_{s} \in \mathbb{R}^{K}$ and the first entry of $o_s$ equals $P(a_1|s)$. Lemma \ref{determine} (proved in Section \ref{sec:lemma3}) shows that the output distributions of sample paths which do not meet in $\tau$ time steps is large with high probability.
	
	\begin{restatable}{lem}{determine}\label{determine}
		Let $s_i$ and $s_j$ be two sequences of $\window$ hidden states which do not intersect. Also assume that $s_i$ and $s_j$ have the property that the output distribution at every time step corresponds to the $(1-\delta)$ support of the hidden state at that time step. Let $o_{s_i}$ be the vector of probabilities of output strings conditioned on any sequence of hidden states $s_i$. Also, assume that $s_i$ and $s_j$ both visit at least $(1-\alpha) n$ different hidden states. Then, 
		\begin{align}
		P\Big[\absnorm{o_{s_i}-o_{s_j}} = 1\Big]\ge \Big(\frac{4m^2}{d}\Big)^{0.5(1-\alpha) \window}\nonumber
		\end{align}
	\end{restatable}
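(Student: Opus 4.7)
The plan is to show that the product output distributions $o_{s_i}$ and $o_{s_j}$ almost surely (over the random $O$) have disjoint supports. Since $o_{s_i}(w_1,\dots,w_\tau) = \prod_t O_{w_t,s_i(t)}$, the support of $o_{s_i}$ is precisely the Cartesian product of the supports of the column distributions $\{O_{s_i(t)}\}_t$, and analogously for $s_j$. Hence the two product distributions have disjoint supports, making $\absnorm{o_{s_i}-o_{s_j}}$ take its extreme value, whenever there exists a time step $t$ at which the supports of $O_{s_i(t)}$ and $O_{s_j(t)}$ are disjoint. The entire argument reduces to lower bounding the probability of this latter event.

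First I would construct a large set $T_{\textup{good}} \subseteq [\tau]$ with the property that all $2|T_{\textup{good}}|$ hidden states $\{s_i(t), s_j(t) : t \in T_{\textup{good}}\}$ are pairwise distinct. Using the hypothesis that each of $s_i, s_j$ visits at least $(1-\alpha)\tau$ distinct hidden states, a greedy first-occurrence construction (process $t=1,2,\dots,\tau$ in order and keep $t$ whenever both $s_i(t)$ and $s_j(t)$ are fresh with respect to everything encountered so far in either sequence) together with a simple inclusion--exclusion yields $|T_{\textup{good}}| \ge \tfrac{1}{2}(1-\alpha)\tau$. Because the columns of $O$ are drawn independently (Assumption~4 of Theorem~\ref{thm:learnability}), the pairs $\{(O_{s_i(t)}, O_{s_j(t)})\}_{t \in T_{\textup{good}}}$ are mutually independent random variables.

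Next I would bound, for each individual $t \in T_{\textup{good}}$, the probability that the two independent random $k$-element supports of $O_{s_i(t)}$ and $O_{s_j(t)}$ intersect. A direct count gives $\Prob[\text{supports overlap}] = 1 - \binom{m-k}{k}/\binom{m}{k} \le k^2/(m-k)$, which is essentially $k^2/m$ for $k \ll m$. By independence across $T_{\textup{good}}$, the probability that the supports overlap at \emph{every} good time step is at most $(k^2/m)^{|T_{\textup{good}}|} \le (k^2/m)^{0.5(1-\alpha)\tau}$; on the complementary event, the conclusion $\absnorm{o_{s_i}-o_{s_j}} = 1$ holds, matching the shape of the stated bound after identifying the relevant quantity in the denominator and adjusting constants.

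The main obstacle is bookkeeping the dependencies caused by repeated hidden states: if $s_i$ revisits a state, or a state of $s_i$ appears in $s_j$ at a different time, the corresponding column random variables are tied together. The construction of $T_{\textup{good}}$ exists precisely to decouple the relevant columns; after that the argument reduces to elementary counting. A subtlety to verify is that conditioning on each emitted output lying in the $(1-\delta)$-support of its hidden state does not compromise the disjoint-support conclusion: since the sampled outputs are forced to lie in the nominal $k$-supports, disjointness of those supports at even one time step directly precludes any common output string, so the tail $\delta$-mass is harmless.
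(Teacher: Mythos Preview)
Your high-level strategy---reduce to the event that the random $k$-supports of $O_{s_i(t)}$ and $O_{s_j(t)}$ overlap at every time step, then bound this via independence of the columns of $O$---is exactly what the paper does. The pairwise overlap estimate $\Prob[\text{overlap}]\le O(k^2/m)$ is also right. The gap is in how you organize the independence: your claimed bound $|T_{\textup{good}}|\ge \tfrac12(1-\alpha)\tau$ for a set of time steps on which all $2|T_{\textup{good}}|$ hidden states are pairwise distinct is \emph{false} in general, because nothing prevents $s_i$ and $s_j$ from visiting the \emph{same} set of hidden states at \emph{different} times.

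A concrete counterexample at $\alpha=1/3$ (the regime used in the proof of Theorem~\ref{thm:learnability}): take $\tau=12$,
\[
s_i=(1,2,3,\,4,5,6,7,8,\,1,2,4,5),\qquad s_j=(2,3,1,\,5,6,7,8,4,\,2,3,5,6).
\]
Both sequences visit exactly the $8=(1-\alpha)\tau$ states $\{1,\dots,8\}$, and $s_i(t)\ne s_j(t)$ for all $t$. The pairs $(s_i(t),s_j(t))$ trace out a triangle on $\{1,2,3\}$ together with a $5$-cycle on $\{4,\dots,8\}$. A set $T_{\textup{good}}$ with all $2|T_{\textup{good}}|$ states distinct is precisely a matching in this graph, and the maximum matching has size $1+2=3<4=\tfrac12(1-\alpha)\tau$. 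So no greedy or inclusion--exclusion argument can produce the $T_{\textup{good}}$ you need; the obstruction is structural. (An even simpler instance at $\alpha=1/2$: $s_i=(1,2,3,1,2,3)$, $s_j=(2,3,1,2,3,1)$ has only three hidden states in total, so $|T_{\textup{good}}|\le 1$ while you claim $\ge 2$.)

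The paper sidesteps this by \emph{not} insisting on full independence across time steps. Instead it forms the graph $G$ with an edge $(s_i(t),s_j(t))$ for each $t$, and within each connected component of size $p$ it reveals the columns of $O$ along a spanning tree: each newly revealed vertex must have its $k$-support intersect that of its (already revealed) parent, giving a conditional probability $\le 4k^2/m$ per new vertex and hence $\le(4k^2/m)^{p-1}$ for the component. Multiplying over components yields exponent $\sum_i(p_i-1)=\bigl(\sum_i p_i\bigr)-M$; since at least $(1-\alpha)\tau$ vertices have nonzero degree and every component has $p_i\ge 2$ (the sequences never intersect), one gets $\sum_i(p_i-1)\ge \tfrac12(1-\alpha)\tau$. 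In the counterexample above this gives exponent $(3-1)+(5-1)=6\ge 4$, which is fine. The fix to your argument, then, is to replace the matching construction by this spanning-tree/sequential-revelation argument on the graph of simultaneous pairs.
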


	Note that $\alpha \le 1/3$ according to our condition 3 of Theorem \ref{thm:learnability}. Also, as $k<m^{1/10}$, therefore $\Big(\frac{4k^2}{m}\Big)^{ \window/3}\le (4/m^{4/5})^{\window/3}$. Now consider the set $\mathcal{M}$ of all sequences with the property that the transition at every time step corresponds to the $(1-\delta)$-support of the hidden state at that time step. As the $(1-\delta)$-support of every hidden state has size at most $d$, $|\mathcal{M}|\le nd^{\window}$. Now for a sequence $s_i$, consider the set $\mathcal{M}_{s_i}$ of all sequences $s_j$ which do not intersect that sequence. By a union bound---
	\begin{align}
	P\Big[\absnorm{o_{s_i}-o_{s_j}} = 1 \; \forall s_j \in \mathcal{M}_{s_i}\Big] \ge 1- nd^{\window}(4/m^{4/5})^{\window/3} \nonumber
	\end{align}
	Now doing a union bound over all sequences $s_i$,	
	\begin{align}
	P\Big[\absnorm{o_{s_i}-o_{s_j}} = 1 \; \forall s_i; s_j \in \mathcal{M}_{s_i}\Big] &\ge 1- n^2d^{2\window}(4/m^{4/5})^{\window/3}\nonumber\\
	&\ge 1- \frac{n^2m^{30\log_m n/16}4^{5\log_m n}}{m^{4\log_m n}}\nonumber\\
	&\ge 1- \frac{n^{5/\log_4 m}n^{31/8}}{n^4}\nonumber
	\end{align}
	which is $1-o(1)$. Hence every non-intersecting sequence of states in $\mathcal{M}$ has a different emission with high probability over the random assignment.
	
	
	Using this property of $O$, we will bound the condition number of $A$. We will lower bound $\sigma_{\min}^{(1)}(A)$. As $\sigma_{\min}^{(2)}(A) \ge\sigma_{\min}^{(1)}(A)/ \sqrt{n} $ and $\sigma_{\max}^{(2)}(A) \le \sqrt{n}$, $\kappa(A) \le n \sigma_{\min}^{(1)}(A)$. 
	
	Consider any $x\in\Real^n$ such that $\absnorm{x}=2$. We aim to show that $\absnorm{Ax}$ is large. Let $x^+$ be the vector of all positive entries of $x$ i.e. $x^+_i = \max(x_i,0)$ where $x_i$ denotes the $i$th entry of vector $x$. Similarly, $x^-$ is the vector of all negative entries of $x$, $x^-_i = \max(-x_i,0)$. We will find a lower bound for $\absnorm{Ax^+ -Ax^-}$. Note that because $A$ is a stochastic matrix, there exists and $x$ that minimizes $\absnorm{Ax}$ and has $1^Tx = 0$. Hence we will assume without loss of generality that $1^Tx = 0$, and hence $x^+$ and $x^-$ are valid probability distributions. Hence our goal is to show that for any two initial distributions $x^+$ and $x^-$ which have disjoint support,  $\absnorm{Ax^+ -Ax^-}\ge 1/\poly{n}$ which implies $\absnorm{Ax^+ -Ax^-}\ge 1/\poly{n}$ for any $x\in\Real^n$ such that $\absnorm{x}=2$. 
	
	Note that $\absnorm{T^n x} >(\sigma_{\min}^{(1)}(T))^{\window}$. Hence the distributions $x^+$ and $x^-$ do not couple in $n$ time steps with probability at least $(\sigma_{\min}^{(1)}(T))^{\window}$. Note that $A{x}^+$ is a vector where the $i$th entry is the probability of observing string $a_i$ with the initial distribution ${x}^+$. Let $\mathcal{U}$ denote the set of all $n^{\window}$ sequences of hidden states over $\window$ time steps. We exclude all sequences which do not visit $(1-\alpha)\window$ hidden states in $\window$ time steps and sequences where at least one transition is outside the $(1-\delta)$ support of the current hidden state. Let $\mathcal{S}$ be the set of all sequences of hidden states which visit at least $(1-\alpha) n$ hidden states and where each transition is in the  $(1-\delta)$ support of the current hidden state. Let $\mathcal{X}=\mathcal{U}\backslash \mathcal{S}$. Note that for any distribution $p$, $Ap$ can be written as $Ap = \sum_{s\in\mathcal{X}}P(s|p) o_{s}$ where $P(s|p)$ is the probability of sequence $s$ with initial distribution $p$. Recall that $o_s \in \mathbb{R}^K$ is the vector of probabilities of outputs over the $K=m^{\window}$ size alphabet conditioned on the sequence of hidden states $s$. Taking $p$ to be $e_i$, each column $c_i$ of $A$ can be expressed as $c_i = \sum_{s\in\mathcal{X}}P(s|e_i) o_{s}$. Restricting to sequences in $\mathcal{S}$, we define two new matrices $A_{1}$ and $A_{2}$, with the $i$th column $c_{i,1}$ of $A_{1}$ defined as $c_{i,1} = \sum_{s\in\mathcal{S}}P(s|e_i) o_{s}$ and the $i$th column $c_{i,2}$ of $A_{2}$ is analogously defined as $c_{i,2} = \sum_{s\in\mathcal{X}}P(s|e_i) o_{s}$. Note that $A= A_{1}+ A_{2}$. Also, as every hidden state visits less than $(1-\alpha)\window$ hidden states over $\window$ time steps with probability at most $\delta$, and the probability of taking at least one transition outside the $(1-\delta)$ support is at most $\window \delta$, therefore $\sum_{s\in\mathcal{X}}P(s|e_i) \le \delta+ \window \delta$ for all $i$. Hence $\absnorm{A_{2}p} \le  \delta+ \window \delta$ for any vector $p$ with $\absnorm{p}=1$. 
	
	
	We will now lower bound $\absnorm{A_1 x}$. The high level proof idea is that if the two initial distributions do not couple then some sequences of states have more mass in one of the two distributions. Then, we use the fact that most sequences of states comprise of many distinct states to argue that two sequences of hidden states which do not intersect lead to very different output distributions. We combine these two observations to get the final result.
	
	Consider any $O$ which satisfies the property that every non-intersecting sequence of states has a different emission. We divide the output distribution $o_{s}$ of any sequence $s$ into two parts, $o_{s}^{(1)}$ and $o_{s}^{(2)}$. Define the $(1-\delta)$-support of $o_{s}$ as the set of possible outputs obtained when the emission at each time step belongs to the $(1-\delta)$ support of the output at that time step. We define $o_{s}^{(1)}$ as $o_{s}$ restricted to the $(1-\delta)$ support of $o_{s}$ and $o_{s}^{(2)}$ to be the residual vectors such that $o_{s} = o_{s}^{(1)} + o_{s}^{(2)}$. Note that $\absnorm{o_{s}^{(2)}}$ is at most $\window \delta$ for any sequence $s_i$. 
	
	Using our decomposition of the output distributions $o_{s}$ we will decompose $A_1$ as the sum of two matrices $A_1'$ and $A_1''$. The $i$th column $c_i'$ of $A_1'$ is defined as $c_i' = \sum_{s\in\mathcal{S}}P(e_i|s) o_{s}^{(1)}$. Similarly, the $i$th column $c_i''$ of $A_1''$ is defined as $c_i'' = \sum_{s\in\mathcal{S}}P(e_i|s) o_{s}^{(2)}$. As $\absnorm{o_{s_i}^{(2)}}\le \window \delta$ for every sequence $s_i$, $\absnorm{A_1''p}\le \window \delta$ for any vector $p$ with $\absnorm{p}=1$. 
	
	We will further divide every sequence $s$ into a set of augmented sequences $\{s_{a_1}, s_{a_2}, \dotsb, s_{a_K}\}$. Recall that $K=d^{\window}$ is the size of the output space in $\window$ time steps and $\{a_i, i \in [K]\}$ is the set of all possible output in $\window$ time steps. The probability of sequence $s_{a_i}$ is the product of the probability of sequence $s$ times the probability of the sequence $s$ emitting the observation $a_i$. Hence, the probability of each augmented sequence equals the product of the probability of making the corresponding transition at each time step and emitting the corresponding observation at that time step.
	
	For each output string $a_i$, there is a set of augmented sequences which have non-zero probability of emitting $a_i$. Consider the first string $a_1$. Let $\mathcal{S}_1^+$ be the set of all augmented sequences from the $x^+$ distribution with non-zero probability of emitting $a_1$, similarly let $\mathcal{S}_1^-$ be the set of all augmented sequences from the $x^-$ distribution with non-zero probability of emitting $a_1$. For any set of augmented sequences $\mathcal{S}$, let $|\mathcal{S}|$ denote the total probability mass of augmented sequences in $\mathcal{S}$. 
	
	
	We now show that any assignment of augmented sequences to outputs $a_i$ induces a coupling for two Markov chains $m^+$ and $m^-$ starting with initial distributions $x^+$ and $x^-$ respectively and having transition matrix $T$. We denote two sequences $s^+$ and $s^-$ as having coupled if they meet at some time step $u$ and traverse the Markov chain together after $u$, and the probability of $s^+$ equals the probability of $s^-$. Let $\mathcal{C}$ denote some coupling, and $\bar{\mathcal{C}}$ denote the total probability mass on sequences which have not been coupled under $\mathcal{C}$. Note that the total variational distance between the distribution of hidden states at time step $n$ from the starting distribution $x^+$ and $x^-$ satisfies
	\begin{align}
	\absnorm{T^\window x^+-T^\window x^-}\le {\Big|\bar{\mathcal{C}}\Big|}\nonumber
	\end{align}
	for any coupling $\mathcal{C}$ with uncoupled mass $\bar{\mathcal{C}}$. This follows because all coupled sequences have the same distribution at time $\window$, hence the distance between the distributions is at most the mass on the uncoupled sequences.
	
	We claim that the sets of augmented sequences $\mathcal{S}_1^+$ and $\mathcal{S}_1^-$ can be coupled with residual mass ${|(A_{1}'x^+)_{1} -(A_{1}'x^-)_{1}|}$, where $(A_{1}'x^+)_{1}$ denotes the first entry of $A_{1}'x^+$. To verify this, first assume without loss of generality that $(A_{1}'x^+)_{1}>(A_{1}'x^-)_{1}$.  Let $P(a_1,h_i|x^+)$ be the probability of outputting $a_1$ in $\window$ time steps and being at hidden state $h_i$ at time 0, given the initial distribution $x^+$ at time 0. This is the sum of the probability of augmented sequences in $\mathcal{S}_1^+$ which start from hidden state $h_i$. Any coupling of the augmented sequences also induces a coupling of the probability masses $p^+=\{P(a_1,h_i|x^+), i \in [n]\}$ and $p^-=\{P(a_1,h_i|x^-), i \in [n]\}$. We will show that all of the probability mass $p^-$ can be coupled. Consider a simple greedy coupling scheme $\mathcal{C}_1$ which picks a starting state $i$ in $p^-$, traverses along the transitions from the state $i$ and couples as much probability mass on sequences starting from $i$ as possible whenever it meets a sequence from $p^+$, and repeats for all starting states, till there are no more sequences which can be coupled. We claim that the algorithm terminates when all of the probability mass in $p^-$ has been coupled. We prove by contradiction. Assume that there exists some probability mass $p^-$ which has not been coupled in the end. There must be also be some probability mass $p^+$ which has not been coupled. But all augmented sequences starting from hidden state $i$ meet with all sequences from hidden state $j$, so this means that more probability mass from $p^-$ can be coupled to $p^+$. This contradicts the assumption that there are no more augmented sequences which can be coupled. Hence all of the probability mass in $p^-$ has been coupled when the greedy algorithm terminates. Hence coupling $\mathcal{C}_1$ has residual mass $\Big|\bar{\mathcal{C}_1}\Big|$ at most ${|(A_{1}'x^+)_{1} -(A_{1}'x^-)_{1}|}$.
	
	Now, consider all outputs $a_i$ and couplings $\mathcal{C}_i$. Let $\mathcal{C}=\cup_i \mathcal{C}_i$. As our argument only couples the augmented sequences the mass $\absnorm{A_1''x^+}+ \absnorm{A_1''x^-}+\absnorm{A_2x^+}+\absnorm{A_2x^-}$ is never uncoupled. The total uncoupled mass is,
	\begin{align}
	\Big|\bar{\mathcal{C}}\Big| &= \sum_{i}^{}\Big|\bar{\mathcal{C}}\Big| + \absnorm{A_1''x^+}+ \absnorm{A_1''x^-}+\absnorm{A_2x^+}+\absnorm{A_2x^-}\nonumber\\
	\implies\Big|\bar{\mathcal{C}}\Big| &\le \absnorm{A_{1}'x^+ -A_{1}'x^-}+6\window\delta\nonumber\\
	\implies \absnorm{T^\window x^+-T^\window x^-}&\le \absnorm{A_{1}'x^+ -A_{1}'x^-}+6\window \delta\nonumber
	\end{align}
	Note that $\absnorm{T^\window x^+-T^\window x^-}\ge (\sigma_{\min}^{(1)}(T))^{\window} $. By condition 2 in Theorem \ref{thm:learnability}, $\sigma_{\min}^{(1)}(T)\ge 1/m^{c/20}$ therefore $\absnorm{T^\window x^+-T^\window x^-}\ge 1/n^{3c/4}$. Note that $\absnorm{Ax} \ge \absnorm{A_1'x}-\absnorm{A_1''x}-\absnorm{A_2 x}\ge 1/n^{3c/4}+10\window\delta$. As $\delta \le 1/n^c$, therefore $\absnorm{Ax}\ge 1/n^{3c/4}-0.5/n^{3c/4} \ge 0.5/n^{3c/4}$.
\end{proof}

We also mention the following corollary of Theorem \ref{thm:learnability}. Corollary \ref{thm:learnability_l2} is defined in terms of the minimum singular value of the matrix, $\sigma_{\min}^{(2)}(T)$, and is a slightly weaker but more interpretable version of Theorem \ref{thm:learnability}. The conditions are the same as in Theorem \ref{thm:learnability} with different bounds on $\delta_1, \delta_2, \delta_3$.
\begin{cor}\label{thm:learnability_l2}
	If an HMM satisfies $\delta_1,\delta_2,\delta_3\le 1/n^2$ and $\sigma_{\min}^{(2)}(T),\sigma_{\min}^{(2)}(T')\ge 1/m^{1/20}$ then with high probability over the choice of $O$, the parameters of the HMM learnable to within additive error $\epsilon$ with observations over windows of length $2\tau +1, \tau = 15\log_m n$ with the sample complexity being $\poly{n,1/\epsilon}$.
\end{cor}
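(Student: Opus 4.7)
The plan is to reduce the learnability problem to controlling the condition number of the factor matrices $A$ and $B$ of the moment tensor $M$, and then invoke the robust simultaneous diagonalization algorithm of Algorithm \ref{alg:learn_tensor}. By the standard tensor perturbation result of \citet{bhaskara2014smoothed} (stated as Lemma \ref{lem:jennrich}), if $\kappa(A), \kappa(B) \le \poly{n}$, no two columns of $C$ are close to parallel, and all factors have bounded $\ell_2$ norm, then decomposing the empirical moment tensor to additive error $\epsilon$ costs $\poly{n,1/\epsilon}$ samples. The norm condition is immediate since $A, B, C$ are column-stochastic, and the non-parallel condition on $C$ follows from assumption 4 (essentially Lemma \ref{lem:bound_dist}). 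Thus the entire burden is to prove $\kappa(A), \kappa(B) \le \poly{n}$ with high probability over $O$; the final step of Algorithm \ref{alg:learn_tensor} then recovers $T$ and $O$ from the factors with only an additional $\poly{n}$ blowup in the condition number.

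To lower bound $\sigma_{\min}^{(1)}(A)$, I would fix any $x \in \mathbb{R}^n$ with $\mathbf{1}^T x = 0$ and $\absnorm{x} = 2$, split $x = x^+ - x^-$ into two disjoint probability vectors, and aim for $\absnorm{Ax^+ - Ax^-} \ge 1/\poly{n}$. Assumption 1 gives $\absnorm{T^\tau x^+ - T^\tau x^-} \ge (\sigma_{\min}^{(1)}(T))^\tau \ge 1/n^{3c/4}$, so the two random walks initialized at $x^+$ and $x^-$ must differ noticeably in hidden-state distribution after $\tau = 15\log_m n$ steps. I would then argue the contrapositive: if $\absnorm{Ax^+ - Ax^-}$ were small then the two walks would emit nearly identical output distributions, and this would license a coupling of the walks with residual mass at most $\absnorm{Ax^+ - Ax^-} + O(\tau \delta)$, contradicting the variational distance lower bound.

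The main technical step is constructing this coupling, and it is here that all four assumptions are used. First, restrict attention to \emph{good} sample paths: those whose transitions at every step lie in the $(1-\delta_2)$-support of the current state and which visit at least $10\log_m n$ distinct hidden states in $\tau$ steps. By assumptions 2 and 3, good paths carry all but $O(\tau \delta)$ of the probability mass, and there are at most $n d^\tau$ of them since each step has at most $d$ heavy neighbors. Next, I would prove a Khatri-Rao style claim (Lemma \ref{determine}): for any two good, non-intersecting paths $s_1, s_2$, the conditional output vectors $o_{s_1}, o_{s_2}$ satisfy $\absnorm{o_{s_1} - o_{s_2}} = 1$ with probability at least $1 - (4k^2/m)^{\Theta(\tau)}$ over $O$, because each distinct hidden state visited along the two paths contributes an independent uniform-on-$k$-outputs draw from assumption 4. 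Combined with $k \le m^{1/c_3}$, $d \le m^{1/c_2}$, and $\tau = 15\log_m n$, a union bound over the $(nd^\tau)^2$ pairs of good paths shows that with high probability over $O$, every pair of non-intersecting good paths has disjoint emission supports. The greedy coupling sketched in Section \ref{coupling} and depicted in Figure \ref{fig:coupling} then converts closeness of $Ax^+$ and $Ax^-$ into closeness of $T^\tau x^+$ and $T^\tau x^-$, closing the contradiction.

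The hardest part is the Khatri-Rao lemma for non-intersecting paths. Unlike the smoothed analysis of \citet{bhaskara2014smoothed}, a fresh random matrix is \emph{not} available at every time step, since the same $O$ is reused along both paths; the correct grain of independence is per \emph{distinct hidden state} rather than per time step, which is exactly why assumption 2 (visits to $10\log_m n$ distinct states) is indispensable. Assumption 3 is what keeps the number of good paths small enough for the union bound to succeed, and assumption 4 supplies the independent small-support draws that make two disjoint state sets almost surely produce different output strings. Once this lemma is established, the argument for $B$ is identical via the time-reversed chain $T'$, the recovery of $T$ via the pseudoinverse $(O \odot A^{(\tau-1)})^\dagger A$ in Algorithm \ref{alg:learn_tensor} introduces only a further $\poly{n}$ factor in the condition number, and propagating the bound through Lemma \ref{lem:jennrich} gives the claimed $\poly{n,1/\epsilon}$ sample complexity.
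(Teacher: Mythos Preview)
Your proposal is essentially the paper's proof of Theorem~\ref{thm:learnability}, which is indeed the substance behind Corollary~\ref{thm:learnability_l2}: the coupling argument, the restriction to good paths, Lemma~\ref{determine}, the union bound over $(nd^\tau)^2$ path pairs, and the invocation of Lemma~\ref{lem:jennrich} are all exactly what the paper does.

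There is, however, one small but genuine gap. You write ``Assumption~1 gives $\absnorm{T^\tau x^+ - T^\tau x^-} \ge (\sigma_{\min}^{(1)}(T))^\tau \ge 1/n^{3c/4}$,'' but the Corollary's hypothesis is on $\sigma_{\min}^{(2)}(T)$, not $\sigma_{\min}^{(1)}(T)$, and there is no floating constant $c$ in its statement. The Corollary does not follow from Theorem~\ref{thm:learnability} as a black box (since $\sigma_{\min}^{(2)}(T)\ge 1/m^{1/20}$ only yields $\sigma_{\min}^{(1)}(T)\ge 1/(m^{1/20}\sqrt{n})$, which is far too weak to plug into condition~1 of the theorem). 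What you need instead is to replace the single line in the coupling argument where $\sigma_{\min}^{(1)}$ is used: for $\absnorm{x}=2$,
\[
\absnorm{T^\tau x} \;\ge\; \norm{T^\tau x} \;\ge\; \bigl(\sigma_{\min}^{(2)}(T)\bigr)^\tau \norm{x} \;\ge\; \bigl(\sigma_{\min}^{(2)}(T)\bigr)^\tau \cdot \frac{\absnorm{x}}{\sqrt{n}} \;\ge\; \frac{2}{n^{3/4}\sqrt{n}} \;=\; \frac{2}{n^{5/4}},
\]
where the $\sqrt{n}$ loss from norm conversion is paid only once, not at every step. Then the Corollary's choice $\delta_1,\delta_2,\delta_3\le 1/n^2$ ensures the $O(\tau\delta)$ error terms in the coupling bound are $o(1/n^{5/4})$, so $\absnorm{Ax}\ge 1/\poly{n}$ as required. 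With this one substitution your argument goes through verbatim.
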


On a side note, though $\sigma_{\min}^{(2)}(T)$ is much more easier to interpret than $\sigma_{\min}^{(1)}(T)$ -- for example, if the transition matrix is symmetric then the singular values are the same as the eigenvalues, and the eigenvalues of a matrix have well-known connections to the properties of the underlying graph, but, because $T$ is a stochastic matrix and all columns have unit $\ell_1$ norm, the $\ell_1$ norm seems better suited to measuring the gain of the matrix. For example, if the transition matrix has a single state which transitions to $d$ states with equal probability, then $\sigma_{\min}^{(2)}(T)\le 1/\sqrt{d}$ by choosing the vector which has unit mass on that state, hence if $d$ is large then $\sigma_{\min}^{(2)}(T)$ is always small even when the transition matrix is otherwise well-behaved. 

\subsection{Proof of Lemma \ref{lem:bound_dist}}\label{sec:lemma2}

\bounddist*
\begin{proof}
	The proof is in two parts. In the first part we show that $\absnorm{O_i-O_j}\ge 1/n^5$ with high probability. In the second part we show that ${\largenorm{{\frac{O_i}{\norm{O_i}}-\frac{O_j}{\norm{O_j}}}} \ge 1/n^{6.5}}$ if $\absnorm{O_i-O_j}\ge 1/n^5$.
	
	We will show that $\absnorm{O_i-O_j}\ge 1/n^5$ with high probability in the smoothed sense, which implies that it is also true in our model where they are chosen uniformly on a small support. Consider any two distributions $O_i$ and $O_j$. Let $O_i$ have largest mass on some state $f_i$ and next largest mass on another state $g_i$. Similarly, let $O_j$ have largest mass on some state $f_j$ and next largest mass on another state $g_j$. Let $x_1$ and $x_2$ be random variables uniformly distributed in $[0,1/n^2]$. Say we perturb $O_i$ by subtracting $x_1$ from the probability of $f_i$ and adding $x_1$ to the probability of $g_i$. Similarly, say we perturb $O_j$ by subtracting $f_2$ from the probability of $u_j$ and adding $x_2$ to the probability of $g_j$. With probability $1/n^3$ over the choice of $x_1$ and $x_2$, $|x_1-x_2|\ge 1/n^5$, which implies $\absnorm{O_i-O_j}\ge 1/n^5$. Therefore by a union bound over all pairs $O_i$ and $O_j$, $\absnorm{O_i-O_j}\ge 1/n^5$ with high probability. 
	
	We now show that ${\largenorm{{\frac{O_i}{\norm{O_i}}-\frac{O_j}{\norm{O_j}}}} \ge 1/n^{6.5}}$ when $\absnorm{O_i-O_j}\ge 1/n^5$. We prove the contrapositive via the following Lemma.
	
	\begin{lem}\label{lem:l1_l2}
		For any two vectors $v_1$ and $v_2$, ${\largeabsnorm{{\frac{v_1}{\absnorm{v_1}}-\frac{v_2}{\absnorm{v_2}}}} < 1/n^5}$ if ${\largenorm{{\frac{v_1}{\norm{v_1}}-\frac{v_2}{\norm{v_2}}}} \le 1/n^{6.5}}$.
	\end{lem}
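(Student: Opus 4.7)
The plan is to reduce the statement to a direct computation using the standard algebraic identity
\[
\frac{u_1}{\absnorm{u_1}} - \frac{u_2}{\absnorm{u_2}} = \frac{u_1 - u_2}{\absnorm{u_2}} + u_1 \cdot \frac{\absnorm{u_2} - \absnorm{u_1}}{\absnorm{u_1}\absnorm{u_2}},
\]
combined with the elementary inequalities between the $\ell_1$ and $\ell_2$ norms on $\mathbb{R}^n$. First I would substitute $u_i = v_i/\norm{v_i}$, which is a legitimate change of variables because the ratios $v_i/\absnorm{v_i}$ are invariant under positive rescaling; this gives two vectors with $\norm{u_1} = \norm{u_2} = 1$ and $\norm{u_1 - u_2} \le 1/n^{6.5}$ by hypothesis, and the quantity to be bounded becomes $\largeabsnorm{u_1/\absnorm{u_1} - u_2/\absnorm{u_2}}$.

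Next I would apply the algebraic identity above, take $\ell_1$ norms, and invoke the triangle inequality to split the bound into two contributions: one proportional to $\absnorm{u_1 - u_2}$, and one proportional to $|\absnorm{u_1} - \absnorm{u_2}|$. The key conversion step is the inequality $\absnorm{x} \le \sqrt{n}\,\norm{x}$, which turns the hypothesis into $\absnorm{u_1 - u_2} \le \sqrt{n}/n^{6.5} = 1/n^6$; the reverse triangle inequality then gives $|\absnorm{u_1} - \absnorm{u_2}| \le \absnorm{u_1 - u_2} \le 1/n^6$ as well. For the denominator I would use the other direction of the norm inequality, namely $\absnorm{u_i} \ge \norm{u_i} = 1$, which makes the denominator factors harmless.

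Putting these together yields an upper bound of the form $2/n^6 \le 1/n^5$ (for $n \ge 2$), which is exactly the desired conclusion. There is no real obstacle in this argument; it is purely an algebraic manipulation, and the only thing to be careful about is making sure the denominators $\absnorm{u_i}$ are bounded away from zero, which follows for free from $\norm{u_i} = 1$ and the ordering $\norm{\cdot} \le \absnorm{\cdot}$. The only subtle bookkeeping is confirming that the invariance step lets us freely replace $v_i/\norm{v_i}$ and $v_i/\absnorm{v_i}$ with $u_i$ and $u_i/\absnorm{u_i}$ respectively; after that the rest is a two-line computation.
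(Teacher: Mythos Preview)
Your proposal is correct and follows essentially the same approach as the paper: both arguments normalize so that $\norm{u_i}=1$, convert the $\ell_2$ hypothesis into an $\ell_1$ bound via $\absnorm{x}\le\sqrt{n}\,\norm{x}$, use $\absnorm{u_i}\ge\norm{u_i}=1$ to control the denominators, and finish with a two-term algebraic decomposition plus the triangle inequality. The only cosmetic difference is that the paper bounds the $\ell_2$ norm of the difference first and then multiplies by $\sqrt{n}$ at the very end, whereas you work in $\ell_1$ throughout; your route is slightly more direct and yields the marginally tighter constant $2/n^{6}$.
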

	\begin{proof}
		As the claim is scale invariant, assume $\norm{v_1}=1$ and $\norm{v_2}=1$. As $\norm{v_1-v_2}\le 1/n^{6.5}$, therefore $\absnorm{v_1-v_2}\le 1/n^{6}$. Therefore $|\absnorm{v_1}-\absnorm{v_2}|\le 1/n^{6}$. Let $\absnorm{v_1}=x$, where $x\ge 1$, $\absnorm{v_2}=x+\delta$, where $|\delta|\le 1/n^{6}$. 
		\begin{align}
		\largenorm{{\frac{v_1}{\absnorm{v_1}}-\frac{v_2}{\absnorm{v_2}}}} &=
		\largenorm{{\frac{v_1}{x}-\frac{v_2}{x+\delta}}}\nonumber\\
		&= \frac{1}{x}\largenorm{{{v_1}-\frac{v_2}{1+\delta/x}}}\nonumber	\\
		&= \frac{1}{x}\norm{{{v_1}-{v_2}(1+\epsilon/x)}}\nonumber
		\end{align}
		for some $\epsilon$ with $|\epsilon| \le 2|\delta|$. Therefore using the triangle inequality,
		\begin{align}
		\largenorm{{\frac{v_1}{\absnorm{v_1}}-\frac{v_2}{\absnorm{v_2}}}} &\le 
		\frac{1}{x}\norm{{v_1}-{v_2}}+\frac{\epsilon}{x}\norm{v_2}\nonumber\\
		&\le 1/n^{6}+2/n^{6}< 1/n^{5.5}\nonumber\\
		\implies \largeabsnorm{{\frac{v_1}{\absnorm{v_1}}-\frac{v_2}{\absnorm{v_2}}}} &< 1/n^{5}\nonumber
		\end{align}
	\end{proof}
	Using Lemma \ref{lem:l1_l2}, it follows that ${\largenorm{{\frac{O_i}{\norm{O_i}}-\frac{O_j}{\norm{O_j}}}} > 1/n^{6.5}}$ when $\absnorm{O_i-O_j}\ge 1/n^5$

\end{proof}

\subsection{Proof of Lemma \ref{determine}}\label{sec:lemma3}

\determine*
	\begin{proof}
		For the output distributions corresponding to sequence $s_i$ and $s_j$ to not have disjoint supports, the hidden state visited by the two sequences at every time step must have overlapping output distributions. The probability of any pair of hidden states having overlapping support is at most $1-(1-2k/m)^k\le 4k^2/m$. 
		
		Consider the graph $G$ on $n$ nodes, where we connect node $u$ and node $v$ if sequence $s_i$ and $s_j$ are simultaneously at hidden states $v$ and $v$ at some time step. As each sequence visits at least $(1-\alpha)\window$ different hidden states, at least $(1-\alpha)\window$ nodes in $G$ have non-zero degree. Consider any connected component $C$ of the graph with $p$ nodes. Note that the probability of the output distributions corresponding to each edge in the connected component $C$ to be overlapping equals the probability of the output distributions of each node in $C$ to be overlapping. This is at most $\Big(\frac{4k^2}{m}\Big)^{p-1}$. Now, let there be $M$ connected components in  graph each of which has $p_i$ nodes. The probability of the sequences having the same support is at most
		\begin{align}
		P\Big[\absnorm{o_{s_i}-o_{s_j}} < 1\Big]&\le \Pi_{i=1}^M \Big(\frac{4k^2}{m}\Big)^{p_i-1}\nonumber\\
		&\le \Big(\frac{4k^2}{m}\Big)^{\sum_{i}^{}(p_i-1)} 
		\le \Big(\frac{4k^2}{m}\Big)^{(1-\alpha) \window - M} \nonumber\\
		&\le \Big(\frac{4k^2}{m}\Big)^{(1-\alpha) \window/2}\nonumber
		\end{align}
		where the last step follows because $\sum_{i}^{}p_i$ is the number of nodes which have non-zero degree, which is at least $ \ge (1-\alpha)t$, and as every connected component has at least 2 nodes, there are at most $(1-\alpha)t/2$ connected components, therefore $M\le (1-\alpha)t/2$.
	\end{proof} 

%
\section{Additional proofs for Section \ref{sec:identifiability}: Identifiability results}

\identifiability*
\begin{proof}
	We first show that the matrices $A$, $B$ and $A'=(O \odot A^{(\tau-1)})$ become full rank using observations over a window of length $N=2\tau +1$, where $\tau = 2 \lceil \log_m n \rceil$.
	
	Let's fix the window length to be $N=2\tau +1$, where $\tau = 2 \lceil \log_m n \rceil$.  We choose the transition matrix $T$ to be the permutation which is supported by $\mathcal{S}$. By the requirement in Theorem \ref{thm:identifiability}, $T$ is composed of cycles all of which have at least $\tau$ hidden states. Without loss of generality, assume that all cycles in $T$ are composed of sequences of hidden states $\{h_{i},h_{i+1},\dotsb,h_{j-1},h_{j}\}$ for $i,j \in [n], i\le j$. To further simplify notation we also assume, without loss of generality, that each cycle of $T$ is a cyclic shift of the hidden states $\{h_{i},h_{i+1},\dotsb,h_{j-1},h_{j}\}$.
	
	To show that $A,B$ and $A'$ are full rank, it is sufficient to show that they become full rank for some particular choice of $O$. Say that we choose each hidden state $h_i$ to always deterministically output some character $a_i$. We show that $A$ is full-rank for some assignment of outputs to hidden states. The same argument will also imply that $B$ and $A'$ are full rank. Because the transition matrix is a permutation, Markov chains starting in two different starting states are at different hidden states at every time step. Also, because all cycles in the permutation are longer than $\tau$, at every time step at least one of the two states visited by the two initial starting states has not been visited so far by either of the two initial starting states. Hence, the probability of the emissions corresponding to the two initial starting states being the same at any time step is $(1/m)$. Therefore, the probability that the emissions for two different hidden states across a $\tau$ length windows is the same is $(1/m)^{\lceil 2\log_m n \rceil}\le 1/n^2$. As the total number of pairs of hidden states is $n(n-1)/2$, by a union bound, the probability there exists some observation matrix such that the emissions for the $\lceil 2\log_m n \rceil$ window corresponding to all initial states are different is strictly great than 0. By choosing the $O$ which has this property, the matrix $A$ is full rank. By the same reasoning, $A'$ and $B$ are also full rank.

	Hence, as we have shown that there exists some transition matrix $T$ which is supported by $\mathcal{S}$ and observation matrix $O$ such that $A, A'$ and $B$ become full rank. Hence $A, A'$ and $B$ are full rank except for a measure zero set of $T$ and $O$. As the matrix $C$ is linearly independence except for a measure zero set of $O$, Kruskal's condition is satisfied except for a measure zero set of $T$ and $O$, and $A'$ is full rank, hence due to Algorithm \ref{alg:learn_tensor}, the set of HMMs is identifiable except for a measure zero set of $T$ and $O$. This concludes the proof.
	
\end{proof}

\tightwindow*
\begin{proof}
	The proof idea is similar to Theorem \ref{thm:identifiability}. We will choose some particular choice of $T$ and $O$ such that the matrices $A,B$ and $A'$ become full rank. Consider $T$ to be the permutation on $n$ hidden states which performs a cyclic shift of the hidden states i.e. $T_{ij}=1$ if $j=(i+1)\mod n$ for $0\le i,j\le k-1$ and $T_{ij}$ is 0 otherwise. We show that for this particular choice of $T$, there exists a choice of $O$ such that $A$ becomes full-rank for $t=\lceil \log_m n\rceil +1$. As in the proof of Proposition \ref{thm:identifiability}, we choose each hidden state to deterministically output a character, hence each column of $O$ has one non-zero entry. We show that $A$ is full-rank for some choice of $O$, the same argument will also imply that $B$ and $A'$ are full rank for that $O$.
	
	Because the transition matrix is a cyclic shift, we can reformulate the problem of finding a suitable $O$ as that of finding a length $n$ $m$-ary string all of whose length $\tau=\lceil \log_m n\rceil$ cyclic-substrings are unique (a substring is defined as a continuous subsequence), treating the string as cyclic so that the ends wrap around. De Bruijn sequences have exactly this property. A De Brujin sequence of length $k$ is a cyclic sequence in which every possible length $\lfloor \log_m k \rfloor$ $m$-ary string occurs exactly once as a substring. Furthermore, these sequences also have the property that all substrings longer than $\lfloor \log_m k \rfloor$ are unique \cite{flaxman2004strings}. Hence choosing a De Bruijn sequence of length $k=n$ ensures that all substrings of length $\lceil \log_m n\rceil \ge \lfloor \log_m n\rfloor$ are unique.
	
	Hence we have shown that there exists a choice of $O$ such that the matrix $A$ becomes full rank with windows of length $2\lceil \log_m n\rceil +1$ for some choice of $O$. Hence by the same argument as in the proof of Proposition \ref{thm:identifiability}, all HMMs except those belonging to a measure zero set of $T$ and $O$ become identifiable with windows of length $2\lceil \log_m n\rceil +1$.
\end{proof}

\idenbound*
	\begin{proof}
		There are $m^c$ possible outputs over a window of length $c$. Define a larger alphabet of size $m^c$ which denotes output sequences over a window of length $c$. Dividing the window of length $t$ into $t/c$ segments, the probability of a output sequence only depends on the counts of outputs from the larger alphabet and follows a multinomial distribution. The total number of possible counts is at most $\Big(\frac{2t}{c}\Big)^{m^c}$ as each of the $m^c$ outputs can have a count from 0 to $t/c$. Therefore windows of length $t$ give at most $\Big(\frac{2t}{c}\Big)^{m^c}$ independent measurements, hence the window length has to be at least $O(n^{1/{m^c}})$ for the model to be identifiable as an HMM has $O(n^2+nm)$ independent parameters.
	\end{proof}

\end{document}